\DeclareMathOperator*{\argmax}{arg\,max}
\newsavebox{\bigimage}
\newtheorem{theorem}{Theorem}[section]
\newtheorem{lemma}[theorem]{Lemma}
\def\eqref#1{equation~\ref{#1}}
\def\floor#1{\lfloor #1 \rfloor}
\def\1{\bm{1}}
\DeclareMathAlphabet{\mathsfit}{\encodingdefault}{\sfdefault}{m}{sl}
\SetMathAlphabet{\mathsfit}{bold}{\encodingdefault}{\sfdefault}{bx}{n}
\newcommand{\E}{\mathbb{E}}
\DeclareMathOperator*{\argmin}{arg\,min}
\newacronym{iot}{IoT}{Internet of Things}
\newacronym{ml}{ML}{machine learning}
\newacronym{dl}{DL}{Deep Learning}
\newacronym{marl}{MARL}{multi-agent reinforcement learning}
\newacronym{rl}{RL}{reinforcement learning}
\newacronym{decpomdp}{Dec-POMDP}{Decentralized Partially Observable Markov Decision Process }
\newacronym{pomdp}{POMDP}{Partially Observable Markov Decision Process}
\newacronym{uav}{UAV}{Unmanned Aerial Vehicle}
\newacronym{dqn}{DQN}{Deep Q-Network}
\newacronym{dnn}{DNN}{deep neural network}
\newacronym{dial}{DIAL}{Differentiable Inter-Agent Learning}
\newacronym{mdp}{MDP}{Markov decision process}
\newacronym{fov}{FoV}{Field of View}
\newacronym{cnn}{CNN}{Convolutional Neural Network}
\newacronym{nn}{NN}{neural network}
\newacronym{ddql}{DDQL}{Distributed Deep Q-Learning}
\newacronym{pdf}{PDF}{Probability Density Function}
\newacronym{ndpomdp}{ND-POMDP}{Networked Distributed Partially Observable Markov Decision Process}
\newacronym{radam}{RAdam}{Rectified Adam}
\newacronym{cdf}{CDF}{cumulative distribution function}
\newacronym{mpc}{MPC}{Model Predictive Control}
\newacronym{rv}{rv}{Random Variable}
\newacronym{qoe}{QoE}{Quality of Experience}
\newacronym{tlc}{TLC}{Telecommunications}
\newacronym{cml}{CML}{communications for machine learning}
\newacronym{mlc}{MLC}{machine learning for communications}
\newacronym{drl}{DRL}{deep reinforcement rearning}
\newacronym{rf}{RF}{Radio Frequency}
\newacronym{urllc}{URLLC}{Ultra-Reliable and Low-Latency Communications}
\newacronym{fl}{FL}{federated learning}
\newacronym{kpi}{KPI}{Key Performance Indicators}
\newacronym{mec}{MEC}{Mobile Edge Computing}
\newacronym{ei}{EI}{Edge Intelligence}
\newacronym{bs}{BS}{base station}
\newacronym{sdn}{SDN}{Software Defined Networking}
\newacronym{mimo}{MIMO}{Multiple-Input Multiple-Output}
\newacronym{gp}{GP}{Gaussian Process}
\newacronym{iiot}{IIoT}{Industrial Internet of Things}
\newacronym{csi}{CSI}{Channel State Information}
\newacronym{sgd}{SGD}{Stochastic Gradient Descent}
\newacronym{iid}{i.i.d.}{independent and identically distributed}
\newacronym{ofdm}{OFDM}{Orthogonal Frequency Division Multiplexing}
\newacronym{los}{LOS}{Line-of-Sight}
\newacronym{nlos}{NLOS}{Non-Line-of-Sight}
\newacronym{snr}{SNR}{Signal to Noise Ratio}
\newacronym{rb}{RB}{Resource Block}
\newacronym{6g}{6G}{sixth generation}
\newacronym{ai}{AI}{artificial intelligence}
\newacronym{sfl}{SFL}{Synchronous Federated Learning}
\newacronym{frfl}{FRFL}{Fixed Rate Federated Learning}
\newacronym{pgm}{PGM}{Probabilistic Graphical Model}
\newacronym{hmm}{HMM}{Hidden Markov Model}
\newacronym{elbo}{ELBO}{Evidence Lower Bound}
\newacronym{pmf}{PMF}{Probability Mass Function}
\newacronym{smab}{SMAB}{Stochastic Multi-Armed Bandit}
\newacronym{mab}{MAB}{multi-armed bandit}
\newacronym{mc}{MC}{Monte Carlo}
\newacronym{is}{IS}{Importance Sampling}
\newacronym{dms}{DMS}{discrete memoryless source}
\newacronym{ucb}{UCB}{upper confidence bound}
\newacronym{ser}{SER}{Symbol Error Rate}
\newacronym{sc}{SC}{Semantic Communications}
\newacronym{voi}{VoI}{Value of Information}
\newacronym{nlp}{NLP}{natural language processing}
\newacronym{ts}{TS}{Thompson Sampling}
\newacronym{cmab}{CMAB}{contextual multi-armed bandit}
\newacronym{rccmab}{RC-CMAB}{rate-constrained \gls{cmab}}
\newacronym{rcmab}{R-CMAB}{remote \gls{cmab}}
\newacronym{ib}{IB}{information bottleneck}
\newacronym{merl}{MERL}{maximum entropy reinforcement learning}
\newcommand\copyrightnotice{%
	\begin{tikzpicture}[remember picture,overlay]
	\node[anchor=north,yshift=-15pt] at (current page.north) {\parbox{\dimexpr\textwidth-\fboxsep-\fboxrule\relax}{
			\centering\footnotesize This paper has been submitted to the IEEE JSAIT's special issue on Modern Compression. Copyright may change without notice.}};
	\end{tikzpicture}
}
\begin{document}

\title{Rate-Constrained Remote Contextual Bandits}

\author{
\IEEEauthorblockN{\large Francesco Pase$^{*\dagger}$, Deniz G\"und\"uz$^\dagger$, Michele Zorzi$^*$ \vspace{1mm}}
    
    \IEEEauthorblockA{
		\small $^*$Department of Information Engineering, University of Padova, Italy. Email: \{pasefrance, zorzi\}@dei.unipd.it
	}
	
	\IEEEauthorblockA{
		\small $^\dagger$ Department of Electrical Engineering, Imperial College London, UK. Email: d.gunduz@imperial.ac.uk
	}
}
\maketitle
\copyrightnotice
\begin{abstract}
We consider a rate-constrained contextual multi-armed bandit (RC-CMAB) problem, in which a group of agents are solving the same \gls{cmab} problem. However, the contexts are observed by a remotely connected entity, i.e., the decision-maker, that updates the policy to maximize the returned rewards, and communicates the arms to be sampled by the agents to a controller over a rate-limited communications channel. This framework can be applied to personalized ad placement, whenever the content owner observes the website visitors, and hence has the context, but needs to transmit the ads to be shown to a controller that is in charge of placing the marketing content. Consequently, the \gls{rccmab} problem requires the study of lossy compression schemes for the policy to be employed whenever the constraint on the channel rate does not allow the uncompressed transmission of the decision-maker's intentions. We  characterize the fundamental information theoretic limits of this problem by letting the number of agents go to infinity, and study the regret that can be achieved, identifying the two distinct rate regions leading to linear and sub-linear regrets respectively. We then analyze the optimal compression scheme achievable in the limit with infinite agents, when using the forward and reverse KL divergence as distortion metric. Based on this, we also propose a practical coding scheme, and provide numerical results.
\end{abstract}

\glsresetall

\begin{IEEEkeywords}
Contextual multi-armed bandits, policy compression, rate-distortion theory.
\end{IEEEkeywords}

\IEEEpeerreviewmaketitle

\section{Introduction}
\label{sec:intro}

The past decade has seen a transition from centralized computing solutions, based on the cloud, to more distributed systems, also known as multi-access edge computing \cite{mahadev2017}, mainly in response to the emerging paradigm of Internet of everything, where data is generated, processed and consumed by a network of connected nodes with diverse storage and computing capabilities. Indeed, today massive amounts of data are generated by edge devices, e.g., sensors, cameras, vehicles and drones, and need to be quickly processed to satisfy the latency constraints of the new vertical services, e.g., \gls{iiot} systems, autonomous driving scenarios or remote surgery. Moving data from distributed and personal devices to the cloud introduces several issues related to privacy, network resources (i.e., offloading data collected by edge devices to central entities may congest the communication networks) and latency.

Another paradigm shift involves the adoption of \gls{ml} as a core technology for future services. \gls{ml} techniques, mainly driven by deep \glspl{nn}, achieve  state-of-the-art performance in many practical applications such as computer vision, speech recognition and automatic control \cite{Chen:PIEE:2019}. However, these impressive performance results require extremely large models, which, in turn, need a lot of data to be trained. This demands the careful design of the distributed learning systems \cite{chen2021distributed, Park2019_edge}, that are foreseen to produce and process most of the data in future communications networks. Consequently, there is an urgent need for new methods to represent and compress the information generated and processed by distributed learning solutions to efficiently make use of the available physical network resources. To this end, many recent works have tried ro reduce the amount of data needed to properly train a distributed model at the edge. One potential solution is \gls{nn} compression, which reduces the amount of information needed to store or transmit an \gls{nn}.

\glsreset{cmab}
The focus of this work is to study the information-theoretic limits of distributed learning in the specific context of multi-agent \glspl{cmab}.
\glspl{cmab} model decision-making problems in which an agent interacts with an environment in sequential rounds. At each round, the agent observes a context, which contains side information on the environment, and has to pull one out of $K$ arms. Based on the observed context and pulled arm, the environment returns a reward, which is sampled according to an unknown distribution. The goal of the agent is to optimize an arm selection strategy to maximize the average sum of obtained rewards. In this scenario, the contexts of $N$ agents are available to a central \textit{decision-maker} that has to inform a remote entity, called the \textit{controller}, on the arms the agents should pull. Depending on the scenario, the controller either directly pulls the arms, or provides the indices of the suggested arms to the agents, that can physically interact with the environment. However, we assume that there is a rate-limited communication channel between the decision-maker and the controller, which limits the number of bits the decision-maker can use to convey the intended arms at each round \footnote{Part of the results presented in this paper have been submitted to the 2022 International Symposium on Information Theory.}

In this paper, we study the problem of communicating policies from the decision-maker to the controller when the available rate is below what would be required to perfectly convey the decision-maker's policy at each iteration, and provide theoretical and practical analysis of lossy policy compression schemes, that take into account the learning objective when specifying the distortion metric to be used. We then draw connections between our framework and the \textit{information bottleneck} \cite{tishby_2000, goyal2018transfer} and \textit{maximum entropy reinforcement learning} \cite{Levine2018, Haarnoja2018} approaches. Finally, we report numerical results in support of our analyses.

\section{Related Work}
\label{sec:related_work}

Future communication networks should be designed with the aim of efficiently supporting edge \gls{ai} applications \cite{chen2021distributed}. In particular, the data exchanged in such systems is correlated with some training process, and so new data compression and representation methods should be considered. The limits of lossy compression have been defined by rate-distortion theory \cite{cover:IT}, which provides the minimum number of bits needed to represent information given a target value for the maximum tolerable distortion between the original data, and the data reconstructed from the compressed version. However, in distributed \gls{ml}, the goal at the receiver is not to perfectly reconstruct the input data, but to perform some inference or training task based on it. In the past literature, these problems have been formulated as distributed hypothesis testing \cite{Berger_1979, Ahlswede-Csiszar} and estimation \cite{Zhang:NIPS:13, Xu:IT:17} with rate-constrained communication links. 

In particular, this work focuses on \gls{marl} \cite{Busoniu:tsmc:2008}, where the goal is not to learn a function through supervised learning, but rather to optimize a policy, i.e.,  a map that, given an observation, provides a probability distribution over a set of feasible actions. Whenever agents take actions, they receive stochastic rewards distributed according to some unknown distributions. In general, the goal of the agents is to learn a policy that maximizes the returned rewards. In our case, the processes of collecting observations, training the policy and taking the actions are implemented by physically distributed entities \cite{Busoniu:tsmc:2008}, which require the design of compression and communication strategies to coordinate the training process. Another application of \gls{marl} is the parallel training of a single logical agent, with the aim of accelerating the training process \cite{pmlr-v48-mniha16, babaeizadeh2016ga3c, clemente:arxiv:2017}.

To this end, an important line of research is the study of the most valuable data to be shared among the agents in order to achieve cooperation, and thus convergence to better policies. Frameworks in \cite{foerster_learning_2016, sukhbaatar_learning_2016, havrylov_emergence_2017, lazaridou_multi-agent_2017} admit cooperation among the agents through the transmission of signals, whose effects on the common task are differentiable with respect to the signal transmission decisions. In this case, the emergence of languages and cooperative messages is analyzed, with perfect communications links. The authors in \cite{Tung_rl_gunduz} consider noisy communications, and propose a scheme to jointly learn good policies and optimal ways to code and transmit the actions over the communications channels.

Other research efforts consider deep \gls{rl} agents, and focus on compressions schemes of \glspl{nn}, which are generally used to approximate value functions and target policies. For example, the authors in \cite{zhang:ijcnn:2019} adopt a knowledge distillation technique to reduce the complexity of a behavioral policy, while training a more complex target policy that serves as a teacher, in the context of parallel training \cite{clemente:arxiv:2017}. In \cite{livne:jstsp:2020} and \cite{Arnob:nips:2021}, the authors achieve model compression adopting pruning techniques \cite{blalock2020, berivan2021, Han:iclr:2016} to reduce the size of an agent's \gls{nn}-based policy. Both papers provide empirical studies of the performance of a single agent when trained with different pruning levels. In \cite{livne:jstsp:2020}, an iterative process is used to prune the model, which is reduced after training the original (full size) \gls{nn}. In our framework, training is performed over rate-limited communications channels, and there could be no opportunity to transmit the whole policy. The authors in \cite{Arnob:nips:2021} consider performing pruning exploiting an initial offline dataset, that is not present in our case. Moreover the pruning level, and so the needed rate to transmit the models, has to be fixed at the beginning of the training process.

As mentioned above, the focus of our analyses is on a multi-agent version of \gls{cmab}, in which a decision-maker observes the contexts for a set of distributed agents, which can interact  with the decision-maker through a remotely connected controller. The framework is similar to that of \cite{fragouli}, where a server communicates actions to a pool of agents. However, in this case the analyzed link is the one between the agents and the server, and is used to send back the observed rewards. Moreover, no context (or state) is considered. In \cite{agarwal2021multiagent}, the same \gls{mab} problem is solved cooperatively by a set of agents that can share some information about the best estimated actions. In this case communication is peer-to-peer with a deterministic rate, and again there is no contextual information. A related formulation has been proposed in \cite{Kalkanli:nips:2021}, where the \textit{Batched Thompson Sampling} algorithm is introduced. In this case the goal is to reduce the number of policy updates, thus reducing the computational complexity of the algorithm. Rounds are grouped into batches, and within one batch arms are pulled without updating the sampling policy. Similarly, in our formulation one round can be considered as a batch, given that the $N$ agents operate in parallel. However, in \cite{Kalkanli:nips:2021}, the \textit{batch size}, i.e., the number of samples without policy updates, can be optimized by the algorithm and is not fixed during the training, i.e., different batches may have different sizes. On the contrary, in our case the batch size is fixed to $N$, and is given by the environment. Moreover, in \cite{Kalkanli:nips:2021} the authors do not consider the contextual case, and there are no communications constraints when pulling arms.

Interestingly, the authors of \cite{Lai2021} study a similar framework to justify the concept of information bottleneck in \gls{rl} \cite{Federici2020LearningRR, igl2019generalization} from a psychological perspective. In this work, the rate-constrained channel is compared to resource-limited policy storing systems, like the brain and noisy storage devices, commenting on the trade-off between policy complexity, i.e., level of correlation between states and actions, and policy performance, i.e., obtained reward. In these studies the single-agent case is considered, and the formulation is somewhat generic with qualitative and empirical considerations, without properly framing the underlying learning problem. In this older study \cite{collins_neuro_2012}, the authors performed a set of behavioral experiments that resemble the \gls{cmab} problem, recording correlations between performance variance, i.e., action stochasticity, and model capacity, i.e., work memory, in humans.

With this work, the aim is to study this relation within the proper information-theoretic setting, relating it to the learning task of \gls{cmab}, and  highlighting the theoretical trade-off between channel rate and learning performance. Moreover, we study practical ways to compress policies having limited impact in the training process.

\section{Problem Formulation}
\label{sec:problem_formulation}

\subsection{The Contextual Multi-Armed Bandit (CMAB) Problem}
\label{sub:mab}

The standard single-agent \gls{cmab} problem considers an agent interacting with the environment by pulling arms upon the observation of some contextual information, and receiving a reward based on the observed context and pulled arm. Specifically, at each round $t = 1, \dots, T$, the environment samples a context $s_t \in \mathcal{S}$ following distribution $P_S$, where $\mathcal{S}$ is a finite set containing all possible contexts. We assume that all contexts are observable, i.e., $\forall s \in \mathcal{S}, ~ P_S(s) > 0$. Given $s_t$, the agent chooses an arm $a_t \in \mathcal{A} = \{1, \dots, K\}$, with probability $\pi_t(a_t | s_t)$. Given the pair $(s_t, a_t)$, the environment returns a stochastic reward $R(s_t, a_t)$ sampled according to $P_R(r|s_t, a_r)$, which is an unknown and stationary distribution that characterizes the reward statistics, and depends on the sampled context and the arm pulled by the agent. We then define $\mu(s,a) = \mathbb{E}_{P_R} \left[ R(s, a)\right]$, and further assume that $R(s, a) \in \left[0, 1\right], ~ \forall s \in \mathcal{S}$ and $\forall a \in \mathcal{A}$. Moreover, we assume that the reward distributions belong to the exponential family\footnote{If $\bm{\theta} = (s, a)$, we can write $ P_R(r|\bm{\theta} ) = b(r) \exp( \eta(\bm{\theta}) T(r) - A(\bm{\theta}))$, where $b$, $T$, and $A$ are known functions, and $A(\bm{\theta})$ is assumed to be twice differentiable.}, as also detailed in \cite{approx_ts}, Assumption 1. The policy $\pi_t(a_t | s_t)$ employed by the agent is a map $\pi_{t} : \mathcal{H}^{t-1} \times \mathcal{S} \rightarrow \Delta_K$, where $\Delta_K$ denotes the $(K-1)$-simplex, containing all possible distributions over the set of $K$ arms, and $\mathcal{H}^{t-1}$ is the history, containing all possible observations, arms and rewards collected until round $t-1$, i.e., the element $H(t-1) \in \mathcal{H}^{t-1}$ is $H(t-1) = \left\{ s_1, a_1, r_1, \dots, s_{t-1}, a_{t-1}, r_{t-1} \right\}$. The goal for the agent is to optimize the policy $\pi_t$ in order to maximize the average sum of received rewards or, equivalently, to minimize the Bayesian regret 
\begin{equation}
\label{eq:bayesian_regret}
\text{BR}(\pi, T) = \mathbb{E} \left[  \sum_{t=1}^T \mu(s_t, a^*(s_t)) - \mu(s_t, A_t) \right],
\end{equation}
where $A_t$ is the arm pulled by the agent in round $t$ sampled according to $\pi_t(a|s_t)$, and $a^*(s_t) = \argmax_{a \in \mathcal{A}} \mu(s_t, a)$ is the optimal arm in round $t$, i.e., the one that maximizes the average reward in context $s_t$. If $a^*(s)$ is not unique, it represents an arbitrarily chosen arm among the optimal ones. Here the expectation is taken with respect to the state, action and problem instance distributions.

\subsection{Rate-Constrained CMAB }
\label{sub:rccmab}

We consider a system in which $N$ agents have to solve the same realization of a \gls{cmab} problem, where each agent observes an independent context, distributed according to $P_S$. The agents can only interact with the environment pulling arms, whereas the contexts are observed by a remote decision-maker. The decision-maker communicates with the agents through a controller, that is in charge of receiving instructions from the decision-maker and informing the agents accordingly. However, the decision-maker can transmit information to the controller through a rate-constrained channel, which imposes a constraint on the number of bits per agent the decision-maker can transit to the controller. Consequently, at each system round, the environment samples $N$ contexts $\left\{s_{j,n}\right\}_{n=1}^N$, i.e., one per agent, which are observed by the decision-maker, which, in turn, needs to decide and communicate to the controller the $N$ arms $\left\{a_{j,n}\right\}_{n=1}^N$ to be pulled by the agents. 

\begin{figure}[t!]
	\centering
	\includegraphics[width=1\linewidth]{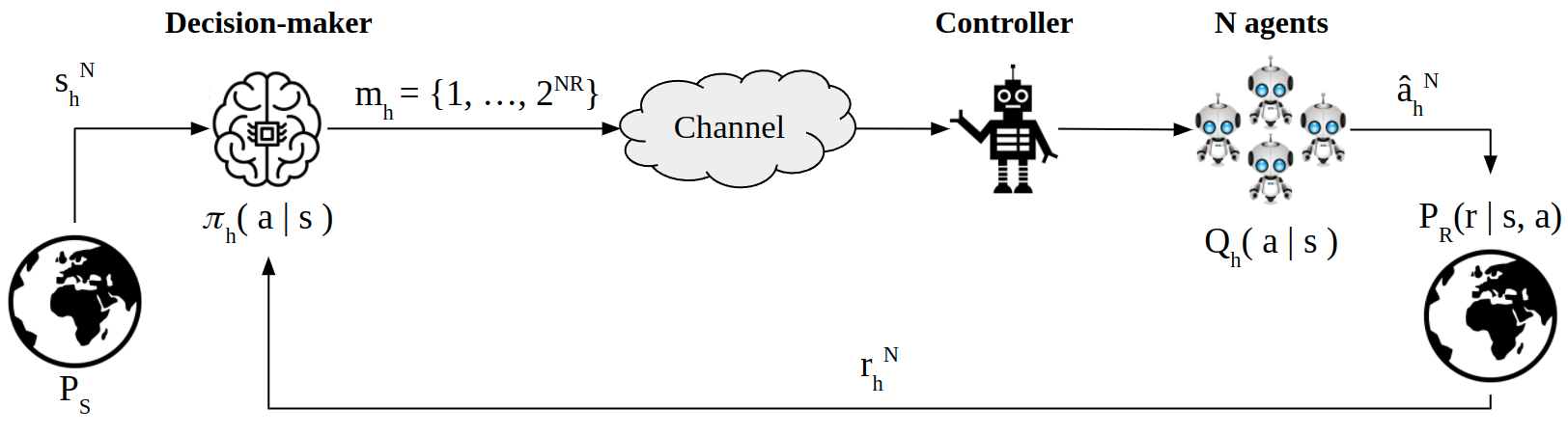}
	\caption{The \gls{rccmab} problem formulation.}
	\label{fig:rccmab}
\end{figure}

As in standard \gls{cmab}, the decision-maker exploits the knowledge accumulated until system round $j$, and encoded in the variable $H(j) = \left\{ \left\{s_{1,n}, a_{1,n}, r_{1,n} \right\}_{n=1}^N, \ldots, \left\{ s_{j,n}, a_{j,n}, r_{j,n}\right\}_{n=1}^N \right\} \in \mathcal{H}^{(j)}$ to optimize its policy $\pi_j$. However, in our setting, the decision-maker can interact with the controller only through a rate-constrained communication channel, which may not allow to transmit all the intended arms to the agents. Consequently, the problem is to communicate the arm distribution, i.e., the policy $\pi_j(a|s)$, which depends on the specific context realizations observed in round $j$, to the controller within the available communication resources while inducing the minimal impact on the performance of the learning algorithm. To this end, the decision-maker employs a function $f^{(N)}_j: \mathcal{H}^{(j-1)} \times \mathcal{S}^N \rightarrow \{1, 2, \ldots, B\}$ mapping the knowledge acquired up to round $j-1$, together with the agents' contexts, to a message index to be transmitted over the channel. At the receiver, the controller adopts a function $g^{(N)}_j: \{1, 2, \ldots, B\} \rightarrow \mathcal{A}^N$ to decode from the received message the $N$ arms to be pulled by the agents. In general, both functions $f^{(N)}_t$ and $g^{(N)}_t$ can be stochastic. We then define the Bayesian system regret as

\begin{equation}
\label{eq:bayesian_system_regret}
\text{BR}\left(J, \left\{f_j^{(N)}, g_j^{(N)}\right\}\right) =  \mathbb{E} \left[ \sum_{j=1}^{J} \sum_{n \in \mathcal{N}} r(s_{j,n}, a^*(s_{j,n}) - r(s_{j,n}, g_{j,n}(m_j)) \right],
\end{equation}
where $g_{j,n}(m_j)$ is the arm pulled by agent $n$ during round $j$ decoded from the message $m_j = f_j^{(N)} \left( H(j-1), s_j^N\right)$, and $s_j^N \in \mathcal{S}^N$ is the vector containing the contextual information in round $j$ for all the $N$ agents. The goal is to specify the encoding and decoding functions, $f^{(N)}_j$ and $g^{(N)}_j$, to minimize the Bayesian system regret in Eq.~(\ref{eq:bayesian_system_regret}). More specifically, the goal is to obtain a regret which is sub-linear in $J$, possibly achieving the same performances with standard \gls{cmab} solutions \cite{lattimore_szepesvári_2020}. For a problem with $N$ agents, a rate $R$ is said to be achievable if there exist functions $\left\{ f^{(N)}_j, g^{(N)}_j\right\}_{j=1}^J$ with rate $\frac{1}{N} \log_2 B \leq R$, and Bayesian system regret 

\begin{equation}
\label{eq:achievable_rate}
\lim_{J \rightarrow \infty} \frac{\text{BR}\left( J, \left\{f_j^{(N)}, g_j^{(N)}\right\} \right)}{J} = 0.
\end{equation}

Note that when there is no communication channel from the decision-maker to the controller, i.e., $B=0$, sub-linear regret is not possible since the agents cannot learn on their own without observing the contexts and rewards. On the other hand, when $B$ is sufficiently large, i.e., $B \geq N \log_2 K$, any desired arm sequence can be conveyed to the controller, and the problem becomes the standard centralized \gls{cmab} problem with $N$ parallel agents. Our goal is to identify the minimal communication needed from the decision-maker to the controller that makes sub-linear regret feasible.
We would like to emphasize that the introduced \gls{rccmab} problem differs from the standard \gls{cmab} formulation in two aspects: First, at each round $j$, the decision-maker pulls $N$ parallel arms through the agents exploiting $\pi_j$, which is updated at the end of each round. This is similar to the standard \gls{cmab} formulation in which the policy can be updated only every $N$ arm pulls. Second, given the available rate $R$, the decision-maker may not be able to convey the exact sequence of arms $\{a_{j,n}\}_{n=1}^N$ sampled from $\pi_j$, and instead must send a compressed version, which may result in some agents to pull sub-optimal arms. We highlight that this is a lossy compression problem; however, unlike classical lossy source coding problems, the goal is not to send a sequence of arms with highest average fidelity, but to enable the agents to pull the arms that would result in a sub-linear regret.

\section{Theoretical Limits}
\label{sec:asymptotic_limits}

In this section, we provide a theoretical analysis of the regret bound achievable by the \gls{ts} strategy, and the minimum rate required to achieve sub-linear regret.

\subsection{\gls{ts} Performance}
\label{subsec:ts_ucb_performance}
 
We first provide the regret performance of the \gls{ts} algorithm,  when there is no constraint on the available rate to transmit the intended arms. In this work, we analyze the general case in which no prior structure is assumed between the optimal policies and the different contexts, and so we consider the simplest implementation of one \gls{mab} agent for each context $s \in \mathcal{S}$. 
%
The \gls{ts} algorithm adopts a Bayesian strategy, estimating the distribution $p^{s,a}(\mu)$ of the reward mean $\mu(s, a) \in \left[0, 1\right]$ in each round $j$ with $p_j^{s,a}(\mu)$. When observing the context $s_j$, it samples $\hat{\mu}_j(s_j, a)  \sim p_j^{s,a}(\mu)$, $\forall a \in \mathcal{A}$, and pulls the arm $a_j = \argmax_{a \in \mathcal{A}} \{\hat{\mu}_j(s_j, a)\}$. In \gls{rccmab} the decision-maker adopts the described sampling strategy for each agent $n \in \{1, \dots, N\}$. After receiving all the rewards $\left\{r_{t,n}\right\}_{n=1}^N$, the decision-maker updates its belief on $\mu(s, a)$ optimizing the posteriors $p_j^{s,a}(\mu)$. The variance of the posterior distributions is exploited to perform exploration. This algorithm is well studied, and is known under the name of Thompson Sampling (TS) \cite{thompson}. The \gls{ts} algorithm implicitly induces a probability distribution $\pi_j(a | s)$ over the arms that can be computed as
\begin{align*}
	\pi_j(a|s) = \int_{\mathbb{R}} p_j^{s,a}(\mu) \prod_{k=1, k \neq a}^{K} P_j^{s,k}(\mu) d\mu,
\end{align*}
where $P_j^{s,k}(\mu)$ is the \gls{cdf} of $\mu(s, k)$, and the random variables $\mu(s, a)$ are independently distributed. We will call $\pi_j(a|s)$ the target policy, i.e., the one that the decision-maker would like to convey to the controller. However, in \gls{rccmab} the constraint on the rate of the communication channel may not allow to sample the arms according to $\pi_j(a | s)$, as explained in Sec.~\ref{sec:problem_formulation}. In this case, the problem is that the decision-maker updates the posteriors $p_j^{s,a}$, and so $\pi_j(a|s)$, using the \gls{ts} algorithm, but can only sample with an approximate policy $Q_j(a|s)$, whenever the available rate is not sufficient to convey $\pi_j(a|s)$.

\subsection{Regret Bounds}
\label{subsec:asymptotic_bounds}

We now report the performance of the \gls{ts} algorithm, when the available rate is sufficient to perfectly convey the arms from the decision-maker to the controller in each round $j = 1, \dots, J$. To this end, we align the parallel interactions between the agents and the environment in time, and consider virtual rounds $t \in \{ 1, \dots, J\cdot N\} = \mathcal{T} $, as if a single agent, i.e., the decision-maker, were playing a sequential \gls{cmab} game, with the constraint that the policy can be updated only every $N$ steps, i.e., at the end of each round. We observe that the order used to align the agent interactions does not affect the analysis, as the agents all play the same policy, and interact with \gls{iid} replicas of the same environment, leading the two formulations to be mathematically equivalent. Consequently, by providing the results as a function of the virtual rounds $t$, we consider the total number of interactions the agents have with the environment, which is consistent with the usual \gls{mab} notation.

\begin{theorem} [\gls{ts} Bayesian System Regret]
	\label{thm:sys_regret}
	The finite-time Bayesian system regret of \gls{ts} is upper bounded by
	\begin{equation}
	\text{BR}(\pi^{TS}, T) \leq 2SKN + 4\sqrt{\left(2 + 6\log T \right) SKN T},
	\end{equation}
	and the asymptotic regret is 
	\begin{equation}
	\text{BR}(\pi^{TS}, T) \in \mathcal{O} \left( \sqrt{KST\log T} \right).
	\end{equation}	
\end{theorem}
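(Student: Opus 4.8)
The plan is to adapt the confidence-bound analysis of the Bayesian regret of \gls{ts} (in the spirit of Russo--Van Roy and \cite{lattimore_szepesvári_2020}) to the contextual and \emph{batched} setting induced by the $N$ parallel agents. Throughout, let $\tau(t)$ be the last virtual round before $t$ at which the posteriors, and hence the per-context--arm counts $n(s,a)$ and empirical reward means $\hat\mu(s,a)$, were refreshed; by the parallel structure this is the end of the batch preceding $t$, so $1 \le t-\tau(t) \le N$. The key structural fact is the \textbf{posterior-sampling identity}: conditioned on the history $\mathcal{F}_{\tau(t)}$ \emph{and} on the realized context $s_t$, the arm $A_t$ pulled for $s_t$ and the optimal arm $a^*(s_t)$ are identically distributed, since both are the $\argmax$ of an independent draw from the same posterior over $\mu(s_t,\cdot)$. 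Hence, for any family of functions $U_t(\cdot,\cdot)$ measurable with respect to $(\mathcal{F}_{\tau(t)},s_t)$, we have $\E[U_t(s_t,a^*(s_t))\mid \mathcal{F}_{\tau(t)},s_t] = \E[U_t(s_t,A_t)\mid \mathcal{F}_{\tau(t)},s_t]$, and therefore
\[
\text{BR}(\pi^{TS},T) = \E\sum_{t=1}^{T}\big(\mu(s_t,a^*(s_t))-U_t(s_t,a^*(s_t))\big) \;+\; \E\sum_{t=1}^{T}\big(U_t(s_t,A_t)-\mu(s_t,A_t)\big).
\]
I would instantiate $U_t(s,a)$ as a clipped upper confidence bound built from the \emph{stale} statistics, of the form $\min\{1,\ \hat\mu_{\tau(t)}(s,a)+\sqrt{c\log T / (n_{\tau(t)}(s,a)\vee 1)}\}$ with the constant $c$ tuned so that the union bound below produces exactly $2+6\log T$; note $U_t\equiv 1$ on any arm not yet pulled by round $\tau(t)$.

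For the first sum, since $\mu,U_t\in[0,1]$ each summand is at most the probability that $U_t$ under-estimates $\mu(s_t,a^*(s_t))$; a Chernoff/Hoeffding bound for the empirical mean — this is where the exponential-family assumption and the twice-differentiability of $A(\cdot)$ enter, pinning down the deviation rate — together with a union bound over the $SK$ context--arm pairs and the at most $T$ possible values of the count, makes this probability polynomially small in $T$ per round, so the first sum is negligible and gets absorbed into the constants. For the second sum, on the complementary high-probability event that $\hat\mu_{\tau(t)}(s_t,A_t)$ does not over-estimate $\mu(s_t,A_t)$ by more than the confidence width, one gets $U_t(s_t,A_t)-\mu(s_t,A_t)\le 2\sqrt{c\log T/n_{\tau(t)}(s_t,A_t)}$ whenever $n_{\tau(t)}(s_t,A_t)\ge 1$, and $\le 1$ otherwise; the failure event contributes at most $T$ times a polynomially small probability, again negligible.

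It then remains to control $\E\sum_t 2\sqrt{c\log T/n_{\tau(t)}(s_t,A_t)}$ over the rounds with $n_{\tau(t)}\ge 1$, plus the number of rounds with $n_{\tau(t)}(s_t,A_t)=0$. The latter is at most $SKN$: for each pair $(s,a)$, only the single batch in which it is first pulled can produce such rounds, and a batch has $N$ interactions. For the former, fix $(s,a)$: because the counts are refreshed only at batch boundaries, each stale value is reused by at most $N$ pulls of $(s,a)$, and among the batches in which $(s,a)$ is actually pulled the stale value strictly increases; hence the $i$-th pull of $(s,a)$ sees a stale count at least $\lceil i/N\rceil\ge i/N$. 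Therefore $\sum_i 1/\sqrt{n_{\tau(t)}}\le \sqrt{N}\sum_{i=1}^{m_{s,a}}i^{-1/2}\le 2\sqrt{N\,m_{s,a}}$, where $m_{s,a}$ is the total number of pulls of $(s,a)$. Summing over the $SK$ pairs and applying Cauchy--Schwarz with $\sum_{s,a}m_{s,a}=T$ gives $\sum_{s,a}2\sqrt{N m_{s,a}}\le 2\sqrt{SKNT}$. Collecting the pieces yields $\text{BR}(\pi^{TS},T)\le 2SKN + 4\sqrt{(2+6\log T)\,SKNT}$ after discarding the negligible failure terms, and the asymptotic claim $\text{BR}(\pi^{TS},T)\in\mathcal{O}(\sqrt{KST\log T})$ follows by treating $N$ as a fixed constant, which leaves the dominant term $\Theta(\sqrt{SKT\log T})$.

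I expect the delicate points to be: (i) making the posterior-sampling identity rigorous in the batched contextual setting — one must condition on the $\sigma$-algebra generated after $s_t$ is revealed but before $A_t$ is sampled, and verify the equivalence is unaffected by the fact that several agents within a batch sample independently from the same posterior; (ii) the staleness bookkeeping, i.e.\ showing that replacing the current count by the batch-start count costs only the factor $\sqrt{N}$ inside the square root (plus the additive $SKN$), argued uniformly over the random within-context batch sizes, since the number of agents that see a given context in a round is itself a random variable bounded by $N$; and (iii) calibrating the confidence radius so that the union bound over the $SK$ pairs and the $\le T$ count values reproduces precisely the stated constant $2+6\log T$, which hinges on the exponential-family concentration granted by the standing assumptions. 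Steps (i)--(iii) are, respectively, the conceptual, combinatorial, and bookkeeping cores; the combinatorial step (ii) is the one I would expect to require the most care.
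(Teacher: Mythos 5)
Your proposal follows essentially the same route as the paper: the Russo--Van Roy posterior-sampling decomposition with upper/lower confidence bounds of width $\sqrt{(2+6\log T)/\phi}$, the observation that the stale (batch-boundary) count at the $i$-th pull of $(s,a)$ is at least $i/N$ (costing a $\sqrt{N}$ inside the integral bound $\sum_i i^{-1/2}\le 2\sqrt{m_{s,a}}$), the additive $SKN$ term for not-yet-pulled arms, and Cauchy--Schwarz over the $SK$ context--arm pairs. The paper simply imports the decomposition and the $SKN$ term as Propositions 1 and 2 of Russo--Van Roy rather than re-deriving them, but the structure and all the key estimates match yours.
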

\begin{proof}
	See Appendix~\ref{proof:ts_regret}.
\end{proof}
We observe that, in the finite-time analysis, an additional term $\sqrt{N}$ appears, with respect to the standard single-agent performance \cite{lattimore_szepesvári_2020}. This is a consequence of the fact that, in one round $j$, $N$ arms are pulled in parallel, without updating the policy. This effect has highest impact during the first rounds, as the policy has not converged yet, and so sub-optimal arms are pulled in parallel. In the long term, the effect vanishes. This result is consistent with the analysis in \cite{Kalkanli:nips:2021}, with the difference that what the authors called batch, in our scenario is the parallel execution of the $N$ agents, and so in our case the batch size is fixed to $N$ and can not be optimized. Consequently, in the finite-time upper bound we obtain a factor $\sqrt{N}$, that replaces the factor $\sqrt{\alpha}$ obtained in \cite{Kalkanli:nips:2021}, where $\alpha$ is the so-called \textit{batch growing factor} \cite{Kalkanli:nips:2021}. The factor $S$ is introduced as we consider the \gls{cmab} problem.

%

\subsection{Rate-Distortion Function for Communicating Policies}
\label{subsec:rate_distortion}

We now present the minimum rate needed to transmit a policy, i.e., the arms $a_j^N = \left( a_{j,1}, \dots, a_{j,N}\right)$ to be sampled for each agent according to $\pi_j(a|s)$, and conditioned on the observed context vector $s_j^N = (s_{j,1}, \dots, s_{j,N})$, when a specific distortion function is adopted to measure the discrepancy between the sequence $z_j^N = \left((s_{j,1}, a_{j,1}), \dots, (s_{j,N}, a_{j,N})\right)$, and the sequence $\hat{z}_j^N = \left((s_{j,1}, \hat{a}_{j,1}), \dots, (s_{j,N}, \hat{a}_{j,N})\right)$, where $\hat{a}$ indicates the arms decoded by the controller based on the received message $m_j$, as indicated in Sec.~\ref{sub:rccmab}. In short, $\hat{a}_j^N$ is the vector containing the arms actually pulled by the agents. Given the underlying learning problem, the quality metric for vector $\hat{a}_j^N$ should not be based on a per-symbol distance, but rather on the sampling probability distributions. Indeed, to obtain sub-linear regret, what interests us is the probability of sampling specific sequences. Consequently, the distortion function $d(\hat{Q}_{\hat{z}^N}, P_{SA})$ compares the empirical distribution $\hat{Q}_{\hat{z}^N}$ of $\hat{z}^N$ with $P_{SA}$, which is the joint distribution $P_{SA} = P_S(S) \cdot \pi(A|S)$. In the sequel, we omit to explicitly write the round index $j$, as the analysis does not depend on it. We consider the particular case in which the distortion measure $d(\hat{Q}_{\hat{z}^N}, P_{SA})$ respects the following properties: it is $1)$ nonnegative; $2)$ upper bounded by a constant $D_{max}$; $3)$ continuous in $P_{SA}$ at $\hat{Q}_{\hat{z}^N}$; $4)$ convex in $P_{SA}$, and such that $5)$ $d(\hat{Q}_{\hat{z}^N}, P_{SA}) = 0 \iff \hat{Q}_{\hat{z}^N} = P_{SA}$. Given the assumptions above, the authors of \cite{CommDistribution} provide the rate-distortion function $R(D)$, i.e., the minimum rate $R=\frac{\log_2 B}{N}$ bits per symbol such that $ \E_{Q_{SA} }[d(\hat{Q}_{\hat{z}^N}, P_{SA})] \leq D$,  in the limit when $N$ is arbitrarily large. Here the expectation is taken with respect to the distribution $Q_{SA} = P_S(S) Q(A|S)$, where $Q(A|S)$ is the sampling policy decoded by the controller from the message sent by the decision-maker. The solution is given by
\begin{align}
\label{eq:rate_dist}
R(D) = \min_{Q_{A|S} :  d(Q_{SA}, P_{SA}) \leq D} I(S; A).
\end{align}

As we can see, in the asymptotic limit when $N \rightarrow \infty$, the problem admits a single-letter solution, which also serves as a lower bound for the finite agent scenario. Let $R_{\pi_j}$ denote the rate required to convey the policy $\pi_j$ perfectly, i.e., with zero distortion. From Eq.~(\ref{eq:rate_dist}), we can see that $R_{\pi_j} = I(S;A)$, where the mutual information is computed under $P_{SA}$ dictated by the policy.

\subsection{Achievable Rate}
\label{subsec:achievable_rate}

We now state the rate condition under which it is possible to achieve sub-linear regret. In the analysis, the available rate $R$ is considered fixed in each round $j$. First of all, we denote with $H(A^*)$ the entropy of the arms under the marginal distribution $\pi^*(a) = \sum_s P_S(s) \pi^*(a|s)$, where $\pi^*$ is the optimal policy, i.e., the one that selects, $\forall s \in \mathcal{S}$, the arm $a^*= \argmax_{a \in \mathcal{A}} \mu(s, a)$. We start by stating the following Lemma, which provides a rate limit below which it is not possible to achieve sub-linear regret.

\begin{lemma}\label{lemma:min_rate}
	If $R < H(A^*)$, it is not possible to achieve sub-linear Bayesian system regret.
\end{lemma}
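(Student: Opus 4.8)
The plan is to show that any rate $R < H(A^*)$ forces the marginal sampling policy decoded by the controller to differ from the optimal marginal $\pi^*(a)$ by a constant amount, which in turn produces a constant per-round regret contribution, hence linear regret. The key chain of reasoning runs through the rate-distortion function of Eq.~(\ref{eq:rate_dist}) and the data-processing/source-coding converse implicit in it: to sample arms according to a policy $Q(a|s)$ the decision-maker must spend at least $I(S;A)$ bits per symbol (asymptotically in $N$), where the mutual information is computed under $Q_{SA} = P_S(s) Q(a|s)$. I would first argue that achieving sub-linear regret \emph{requires} the decoded policy $Q_j(a|s)$ to converge (in an appropriate sense, e.g. on the reward-relevant events) to the optimal deterministic policy $\pi^*(a|s)$, since any persistent probability mass on sub-optimal arms in any context $s$ with $P_S(s)>0$ yields an $\Omega(1)$ regret per round and thus $\Omega(J)$ total regret.

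The second step is the entropy/rate bookkeeping. Under the optimal policy $\pi^*$, which is deterministic given $s$, the conditional entropy $H(A\mid S)=0$, so $I(S;A) = H(A^*) - H(A^*\mid S) = H(A^*)$; that is, $R_{\pi^*} = H(A^*)$. More carefully, for \emph{any} policy $Q$ we have $I(S;A) = H(A) - H(A\mid S) \ge H(A) - \sum_s P_S(s)\log K$ is not tight enough; instead I would use lower semicontinuity: if $Q_j \to \pi^*$ then $I_{Q_j}(S;A) \to I_{\pi^*}(S;A) = H(A^*)$ by continuity of mutual information on the finite alphabet $\mathcal{S}\times\mathcal{A}$. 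Hence for the decoded policies to track the optimal one closely enough to avoid linear regret, the required rate must approach $H(A^*)$; if the available rate is capped at $R < H(A^*)$, then for every round $j$ the decoder's policy $Q_j$ is confined to the set $\{Q : I_Q(S;A) \le R\}$, which is bounded away from $\pi^*$ in total variation by some $\delta(R)>0$. This forces a per-round regret at least $\delta(R)\cdot \Delta_{\min} \cdot N$ for a constant minimal gap $\Delta_{\min} = \min_{s}\big(\mu(s,a^*(s)) - \max_{a\neq a^*(s)}\mu(s,a)\big)$ (assuming, as is standard, a positive gap; the degenerate zero-gap case can be handled separately since then all arms are optimal and the statement is vacuous), summing to linear regret in $J$.

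The third step is to make ``the decoded policy must converge to $\pi^*$'' precise and to handle the fact that $\pi^*$ is unknown a priori. Here I would note that the statement is about the \emph{limit} $\lim_{J\to\infty} \mathrm{BR}/J = 0$, so it suffices to exhibit, for each fixed $R < H(A^*)$, a subsequence of rounds on which the regret rate stays bounded below. Since the decision-maker learns the reward means, the target policy $\pi_j$ itself converges to $\pi^*$, so after a finite (random, a.s.\ finite) time the only obstruction to zero per-round regret is the compression step, not the learning step. At that point the rate constraint $R < H(A^*) = \lim_j R_{\pi_j}$ bites: the decoder cannot reproduce $\pi_j$ (nor any policy within $o(1)$ of it in the reward-relevant sense) using fewer than $H(A^*) - \epsilon$ bits, by the converse embedded in Eq.~(\ref{eq:rate_dist}) applied with a distortion that penalizes deviation of the marginal from $\pi^*$.

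The main obstacle I anticipate is rigorously connecting ``total variation distance between $Q_j$ and $\pi^*$'' to ``rate deficit'' in a way that is uniform over rounds: mutual information is continuous but not strongly convex, so I need a quantitative lower bound of the form $I_Q(S;A) \ge H(A^*) - \psi(\|Q - \pi^*\|_{TV})$ with $\psi \to 0$ as the argument $\to 0$, and then invert it. This is where the finite alphabet helps — one can get an explicit modulus of continuity for $H$ on $\Delta_K$ (e.g. via the standard bound $|H(P)-H(P')| \le \|P-P'\|_{TV}\log(K-1) + h_b(\|P-P'\|_{TV})$) and push it through. The other delicate point is that Eq.~(\ref{eq:rate_dist}) is an asymptotic ($N\to\infty$) statement and also a lower bound for finite $N$; I would invoke exactly that lower-bound property so the converse applies to the actual finite-agent system without extra work.
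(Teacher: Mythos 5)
Your proposal is correct and follows essentially the same route as the paper's own (very terse) proof: identify $R_{\pi^*}=H(A^*)$ from the determinism of the optimal policy, invoke the converse of the rate--distortion function in Eq.~(\ref{eq:rate_dist}) (valid as a lower bound for finite $N$) to conclude that any decodable policy stays a constant distance from $\pi^*$, and convert that constant total-variation gap into a constant per-round expected regret, hence linear regret. The extra care you take --- the compactness argument showing $\{Q: I_Q(S;A)\le R\}$ is bounded away from $\pi^*$, and the treatment of the minimal reward gap --- only makes explicit what the paper leaves implicit.
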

\begin{proof} 
	See Appendix~\ref{proof:rate}.
\end{proof}
The following Lemma provides the achievability part. 
\begin{lemma}
	\label{lemma:achivable_rate}
	If $R > H(A^*)$, then it is possible to achieve sub-linear Bayesian system regret in the limit $N \rightarrow \infty$.
\end{lemma}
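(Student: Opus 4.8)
The plan is to exhibit an explicit coding scheme that spends a vanishing fraction of the rounds on exploration carried out at zero rate and, on every remaining round, conveys the \gls{ts} target policy $\pi_j$ with a distortion that is driven to zero; the Bayesian system regret then vanishes on average. The first thing to record is the rate the optimal policy itself costs: under $\pi^*$ the played arm $A^*=a^*(S)$ is a deterministic function of the context, so $I_{\pi^*}(S;A)=H(A^*)-H(A^*|S)=H(A^*)$, which by Eq.~(\ref{eq:rate_dist}) equals $R_{\pi^*}$. The hypothesis $R>H(A^*)$ therefore leaves strictly positive slack $R-R_{\pi^*}>0$ and, together with Lemma~\ref{lemma:min_rate}, pins $H(A^*)$ as the exact threshold.

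I would run \gls{ts} at the decision-maker and fix a schedule of ``exploration rounds'' $\mathcal{E}=\{i^2:i\ge1\}$, which has vanishing density. In the $i$-th exploration round every agent deterministically pulls arm $(i\bmod K)+1$; this requires no communication, since the round index is common knowledge, yet the decision-maker still updates its posteriors $p_j^{s,a}$ from the observed $(s,a,r)$ triples. Because $P_S(s)>0$ for every $s\in\mathcal{S}$, after $m$ exploration rounds each pair $(s,a)$ has accumulated $\Theta(mNP_S(s)/K)$ samples, so on a problem instance with strictly positive suboptimality gaps the posteriors concentrate and $\pi_j\to\pi^*$; consequently $I_{\pi_j}(S;A)\to H(A^*)<R$, hence $I_{\pi_j}(S;A)<R$ for every $j$ outside a finite set. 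On a non-exploration round $j$ with $I_{\pi_j}(S;A)<R$ I would invoke the rate--distortion achievability of Eq.~(\ref{eq:rate_dist}) with target $P_{SA}=P_S\cdot\pi_j$ and a distortion budget $D_j\downarrow0$: this is feasible because $R(\cdot)$ is non-increasing and $R(D_j)\le R(0)=R_{\pi_j}=I_{\pi_j}(S;A)<R$, the conditional $Q=\pi_j$ being admissible for every $D_j\ge0$. Thus, as $N\to\infty$, the controller recovers a sampling policy $Q_j(A|S)$ for which $\E_{Q_{j,SA}}[d(\hat{Q}_{\hat{z}^N},P_{SA})]\le D_j$, and (by the law of large numbers, letting $N\to\infty$) the induced joint law $P_S\cdot Q_j$ satisfies $d(P_S\cdot Q_j,\,P_S\cdot\pi_j)\le D_j$; on the finitely many remaining rounds the scheme falls back to the zero-rate default.

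For the regret accounting, the exploration rounds together with the finite exceptional set contribute at most $(\lfloor\sqrt{J}\rfloor+O(1))\,N=o(J)$ to $\text{BR}(J,\cdot)$. On a non-exploration round $j$ the expected per-round regret equals $N\,\E_{S\sim P_S,\,A\sim Q_j(\cdot|S)}[\mu(S,a^*(S))-\mu(S,A)]$, which, since the gaps lie in $[0,1]$ and vanish $\pi^*$-almost surely, is at most $N\lVert P_S\cdot Q_j-P_S\cdot\pi^*\rVert_{\mathrm{TV}}$. Now $d(P_S\cdot Q_j,\,P_S\cdot\pi_j)\le D_j\to0$ while $P_S\cdot\pi_j\to P_S\cdot\pi^*$, so by the continuity of $d$ one gets $d(P_S\cdot Q_j,\,P_S\cdot\pi^*)\to0$, whence $P_S\cdot Q_j\to P_S\cdot\pi^*$ in total variation, using compactness of the probability simplex on $\mathcal{S}\times\mathcal{A}$ and the property that $d$ vanishes only on the diagonal. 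Writing the resulting bound as $N\varepsilon_j$ with $\varepsilon_j\to0$, the running averages satisfy $\frac{1}{J}\sum_{j=1}^{J}\varepsilon_j\to0$, hence $\text{BR}(J,\cdot)/J\to0$, which is Eq.~(\ref{eq:achievable_rate}). The outer expectation over problem instances preserves this limit by bounded convergence, as $\text{BR}(J,\cdot)/J\le N$ and zero-gap instances incur no regret.

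The step I expect to be the main obstacle is the coupling, on exploitation rounds, between the requirement that learning keep progressing even though agents sample from the compressed proxy $Q_j$ rather than from $\pi_j$, and the fact that before the posteriors have concentrated $I_{\pi_j}(S;A)$ may exceed $R$, so that $\pi_j$ cannot be transmitted at all. The vanishing-density exploration phase is the device that breaks this deadlock: it forces posterior concentration---hence $\pi_j\to\pi^*$ and eventually $I_{\pi_j}(S;A)<R$---while consuming no rate, thereby isolating the regime in which the single-letter characterization of Eq.~(\ref{eq:rate_dist}) legitimately applies and in which $Q_j$ is provably close to $\pi^*$. A secondary subtlety is that Eq.~(\ref{eq:rate_dist}) holds only asymptotically in $N$, which is exactly why the lemma is phrased in the limit $N\to\infty$.
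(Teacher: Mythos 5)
Your proposal is correct and follows essentially the same strategy as the paper's proof: force exploration at a vanishing density so that the posteriors concentrate, observe that $R_{\pi^*}=I_{\pi^*}(S;A)=H(A^*)<R$ so that eventually $R_{\pi_j}<R$ and the \gls{ts} policy can be conveyed (asymptotically in $N$) with vanishing distortion, and absorb the exploration rounds together with the finite exceptional set into an $o(J)$ regret term. The only substantive difference is the exploration device --- the paper mixes in uniform play probabilistically in every round with probability $\rho_j\in o(1)$, $\sum_j\rho_j=\infty$, leaning on external concentration guarantees, whereas you use a deterministic, communication-free round-robin schedule on a sparse set of rounds --- but both serve the identical purpose and lead to the same conclusion.
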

\begin{proof}
	See Appendix~\ref{proof:rate}.
\end{proof}

We can see that, due to Lemma~\ref{lemma:achivable_rate}, even if for some round $j$, $R_{\pi_j} > R$, as long as $R > H(A^*)$, it is still possible to achieve sub-linear regret. According to the definition in Eq.~(\ref{eq:achievable_rate}), this implies that, as $N \rightarrow \infty$, any rate $R > H(A^*)$ is achievable, while any rate $R<H(A^*)$ is not achievable. The entropy of the marginal $\pi^*(a)$ is thus the fundamental information-theoretic limit of the problem to achieve sub-linear regret.

\section{Policy Compression}
\label{sec:policy_compression}

We are now ready to study compression strategies to deal with the case in which it is not always possible to convey the policy with zero distortion, i.e., $\exists ~ j$ s.t. $R_{\pi_j} > R$. In such cases, it is not clear which is the message $m_j$ the decision-maker should transmit to the controller. As a consequence of Eq.~(\ref{eq:rate_dist}), when $R_{\pi_j} > R$, the sampling policy $Q_j$ adopted by the controller may differ from $\pi_j$. In \cite{approx_ts}, the authors provide some theoretical guidelines to construct approximate sampling policies to make the posteriors, i.e., $\pi(a|s)$, converge to the optimal one achieving sub-linear regret, even when an agent is sampling with a different policy $Q(a|s)$. In particular, they studied the case in which the sampling distribution $Q$ differs from the target posterior $\pi$, using the $\alpha$-divergence $D_{\alpha}(\pi, Q)$ as distortion measure, which is defined as
\begin{align}
D_\alpha (\pi,Q) = \frac{1 - \int \pi(x)^\alpha Q(x)^{1 - \alpha} dx}{\alpha(1-\alpha)}.
\end{align}

We now provide two theoretical compression schemes that adopt the forward KL divergence $ D_{\alpha \rightarrow 0}(\pi, Q) = D_{KL}(\pi || Q)$, and the reverse KL divergence $ D_{\alpha \rightarrow 1}(\pi, Q) = D_{KL}(Q || \pi)$, as distortion functions, which are the two cases considered also in \cite{approx_ts}. We remember that, for two discrete distributions $p$ and $q$ such that $p$ is absolutely continuous with respect to $q$, i.e., if $x \in \mathcal{X}$ is such that $q(x)=0$, then $ p(x) = 0$, $D_{KL}(p || q) = \sum_{x \in \mathcal{X}} p(x) \log \frac{p(x)}{q(x)}$. 

Another reason to adopt these two metrics is related to the fact that it is possible to bound, in each round $j$, the gap between the expected reward of the target policy $\pi_j$, and that obtained by using the approximate policy $Q_j$. To this end, we denote by $\mu^{\pi}(s,a)$ the average reward obtained in context $s$ when using policy $\pi$, i.e., $\mu^{\pi}(s,a) = \mathbb{E}_{\pi(a|s)}\left[ \mu(s, a)\right]$, and find
\begin{align}
	\mathbb{E}_{P_S} \left[ \left| \mu^{\pi_j}(S, A) - \mu^{Q_j}(S, A) \right| \right] &=\sum_{s \in \mathcal{S}} P_S(s) \sum_{a \in \mathcal{A}} \mu(s, a) \lvert \pi_j(a|s) - Q_j(a|s)\rvert\\
	& \stackrel{(a)}{\leq} \sum_{s \in \mathcal{S}} P_S(s) \sum_{a \in \mathcal{A}} \lvert \pi_j(a|s) - Q_j(a|s)\rvert \\
	&=  \sum_{s \in \mathcal{S}} P_S(s) \lvert\lvert \pi_j(\cdot |s) - Q_j(\cdot |s)\rvert\rvert_1 \\
	& \stackrel{(b)}{\leq} C \cdot \mathbb{E}_{P_S} \left[ \sqrt{ D_{KL}\left(\pi_j(\cdot|S) || Q_j(\cdot|S) \right)}\right],
\end{align}
where (a) holds because $\mu(s, a) \in \left[0, 1\right]$ by assumption, (b) is the Pinsker's inequality, and $C$ is a constant that depends on the base of the logarithm in the divergence, e.g., $C = \sqrt{\frac{1}{2 \ln 2}}$ if in base $2$. We notice that, by swapping the roles of the two distributions in the last inequality, it is possible to obtain the version with the reverse KL divergence.

\textbf{Observation.} It is known that by minimizing the forward KL divergence $D_{KL}(\pi || Q)$, we obtain a more "spread" solution $Q$, that tends to cover the whole domain of $\pi$. Indeed, $\forall x \in \mathcal{X}$ s.t. $\pi(x) >0$, a penalty is added whenever the two distributions differ, i.e., the function $ \log\frac{\pi(x)}{Q(x)}$ is weighed by $\pi(x)$. On the other hand, in the reverse KL divergence, the log function is weighed by distribution $Q$. This means that, if $Q(x) =0$, there is no incurred penalty for not approximating the policy $\pi(x)$ in $x$, leading to a solution that puts mass around the peaks of $\pi$. Consequently, the exploration-exploitation trade-off is usually biased towards a more exploration-seeking solution when minimizing $D_{KL}(\pi || Q)$, and to a more exploitation-seeking solution when minimizing $D_{KL}(Q|| \pi)$.

\subsection{Reverse KL Divergence}
\label{subsec:compression_rev_kl}

In this case, the adopted distortion function is $d(Q_{SA}, \pi_{SA}) = \mathbb{E}_{P_S} \left[ D_{KL}(Q(\cdot | S) || \pi(\cdot | S))\right]$. In the following lemma, we provide the shape of the policy $Q(a|s)$ that can achieve the minimum in Eq.~(\ref{eq:rate_dist}), i.e., the policy that minimizes the required rate while meeting the constraint on the reverse KL divergence.

\begin{lemma}\label{lemma:compression_rev_kl}
	Given the constraint on the reverse KL divergence $\text{D}_{KL} \left( Q || \pi \right) \leq \delta$, the policy that achieves the minimum in Eq.~(\ref{eq:rate_dist}) is
	\begin{equation}
		Q_{\lambda}(a|s) = \frac{\tilde{Q}(a)^{\lambda} \pi(a|s)^{1-\lambda}}{Z},
	\end{equation}
	where $\lambda \in \left[ 0, 1\right]$ is such that $\text{D}_{KL} \left( Q_{\lambda} || \pi \right) = \delta$, $\tilde{Q}(a)$ is the marginal, i.e., $\tilde{Q}(a) = \sum_{s \in \mathcal{S}} P_S(s) Q_{\lambda}(a|s)$, and $Z$ is the normalization factor.
\end{lemma}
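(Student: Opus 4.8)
The plan is to treat Eq.~(\ref{eq:rate_dist}) as a convex program and characterize its minimizer through KKT conditions. Write $\tilde Q(a)=\sum_{s}P_S(s)Q(a|s)$ for the arm marginal induced by a candidate sampling policy $Q(a|s)$; then the objective is $I(S;A)=\sum_{s,a}P_S(s)Q(a|s)\log\frac{Q(a|s)}{\tilde Q(a)}$ and the feasible region is $\{Q:\mathbb{E}_{P_S}[D_{KL}(Q(\cdot|S)\|\pi(\cdot|S))]\le\delta,\ \sum_aQ(a|s)=1\ \forall s,\ Q(a|s)\ge 0\}$. The first step is to use the variational identity $I(S;A)=\min_{\nu\in\Delta_K}\sum_{s,a}P_S(s)Q(a|s)\log\frac{Q(a|s)}{\nu(a)}$, whose inner minimizer is $\nu=\tilde Q$; this turns the problem into a jointly convex minimization in $(Q,\nu)$ over a compact set, for which a minimizer exists and can be reached by alternating minimization (Blahut--Arimoto).

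Second, fix $\nu$ and minimize over $Q$. I would introduce a multiplier $\mu\ge 0$ for the distortion constraint and multipliers $\{\beta_s\}$ for the normalization constraints, differentiate the Lagrangian with respect to $Q(a|s)$, and set it to zero; this yields $\log Q(a|s)=\tfrac{1}{1+\mu}\log\nu(a)+\tfrac{\mu}{1+\mu}\log\pi(a|s)+c(s)$, where $c(s)$ absorbs $\beta_s$ and $P_S(s)$. Putting $\lambda:=\tfrac{1}{1+\mu}\in[0,1]$, normalizing over $a$, and substituting the optimal $\nu=\tilde Q$ gives exactly $Q_\lambda(a|s)=\tilde Q(a)^{\lambda}\pi(a|s)^{1-\lambda}/Z$ with $Z=Z(s)=\sum_a\tilde Q(a)^{\lambda}\pi(a|s)^{1-\lambda}$. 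Since $I(S;A)$ is convex in the channel $Q$ for fixed input $P_S$, the distortion functional is convex in $Q$, and the remaining constraints are linear, the KKT conditions are sufficient, so the stationary point is a global minimizer; complementary slackness forces the constraint to be active whenever $\mu>0$, i.e., $D_{KL}(Q_\lambda\|\pi)=\delta$.

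Third, I would pin down the admissible range of $\lambda$ and reconcile the self-referential appearance of $\tilde Q$. The identity $\tilde Q(a)=\sum_sP_S(s)\tilde Q(a)^{\lambda}\pi(a|s)^{1-\lambda}/Z(s)$ is a fixed-point equation; a solution exists for each $\lambda\in[0,1]$ as the marginal of the $Q$-block optimum in the alternating scheme above. At $\lambda=0$ one has $Q_0(\cdot|s)=\pi(\cdot|s)$, so $D_{KL}(Q_0\|\pi)=0$; at $\lambda=1$ one has $Q_1(\cdot|s)=\tilde Q(\cdot)$, independent of $s$, so $I(S;A)=0$. Using continuity and monotonicity of $\lambda\mapsto D_{KL}(Q_\lambda\|\pi)$ (standard for this exponential-tilting family, and consistent with $\mu=(1-\lambda)/\lambda$ being the sensitivity of the optimal rate to $\delta$), for any feasible $\delta$ in the attainable range there is a $\lambda\in[0,1]$ with $D_{KL}(Q_\lambda\|\pi)=\delta$; for $\delta$ beyond that range one takes $\lambda=1$ and $R=0$.

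The step I expect to be the main obstacle is not the differentiation --- that is routine --- but justifying the auxiliary-variable reformulation together with the fixed-point structure: one must argue that minimizing jointly over $(Q,\nu)$ is equivalent to the original problem, that its stationary points are exactly the claimed $Q_\lambda$, and that the Blahut--Arimoto fixed point defining $\tilde Q$ is attained. A secondary technical point is support handling: if $\pi(a|s)=0$ then absolute continuity forces $Q(a|s)=0$, so the stationarity argument should be carried out on $\{a:\pi(a|s)>0\}$, which is harmless but worth a remark.
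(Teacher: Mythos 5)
Your proposal is correct and follows essentially the same route as the paper's own proof: the double-minimization (Blahut--Arimoto) reformulation of Eq.~(\ref{eq:rate_dist}) with the auxiliary marginal, Lagrangian stationarity for the inner problem, the reparametrization $\lambda = 1/(1+\mu)$ of the distortion multiplier, and convexity to place the optimum on the constraint boundary. Your additional remarks on the fixed-point nature of $\tilde{Q}$, the $s$-dependence of $Z$, and support handling are refinements of details the paper leaves implicit, not a different argument.
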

\begin{proof} 
	See Appendix~\ref{subsub:reverse_kl_proof}.
\end{proof}

We can see that, when $\lambda = 0$, we have $Q(a|s) = \pi(a|s)$, and so $R_Q = R_{\pi} = I(S;A)$, where the mutual information is computed with respect to $P_{SA}$. As $\lambda$ increases from $0$ to $1$, the policy $Q$ tends to be more similar to the marginal $\pi(a) = \sum_{s \in \mathcal{S}} P_S(s) \pi(a|s)$, which requires zero rate. Indeed, when $\lambda = 1$, the marginal $\pi(a)$ does not require context information, and so the agents would sample arms regardless of the contexts $s_{j,n}$.

\subsection{Forward KL Divergence}
\label{subsec:compression_forward_kl}

We consider $d(Q_{SA}, \pi_{SA}) = \mathbb{E}_{P_S} \left[ D_{KL}( \pi(\cdot | S) || Q(\cdot | S) )\right]$. As for the reverse case, we report the shape of the optimal compressed policy.

\begin{lemma}\label{lemma:compression_forward_kl}
	Given the constraint on the forward KL divergence $\text{D}_{KL} \left(\pi || Q \right) \leq \delta$, the policy that achieves the minimum in Eq.~(\ref{eq:rate_dist}) is
	\begin{equation}
	Q_{\lambda}(a|s) = \frac{\lambda \pi(a|s) ~ \mathbf{W}_0\left( \frac{\lambda \pi(a|s)}{\tilde{Q}}\right)}{Z}
	\end{equation}
	where $\lambda$ is such that the maximum distortion is achieved with equality $\text{D}_{KL} \left( \pi || Q_{\lambda}\right) = \delta$, $\mathbf{W}_0(\cdot)$ is the Lambert function \cite{lambert}, $\tilde{Q}(a)$ is the marginal, i.e., $\tilde{Q}(a) = \sum_{s \in \mathcal{S}} P_S(s) Q_{\lambda}(a|s)$, and $Z$ the normalization factor.
\end{lemma}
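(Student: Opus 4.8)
The plan is to read Eq.~(\ref{eq:rate_dist}) as a convex program and attack it with Lagrangian/KKT machinery. For a fixed context prior $P_S$, the objective $I(S;A)=\sum_s P_S(s)\sum_a Q(a|s)\log\frac{Q(a|s)}{\tilde{Q}(a)}$ is convex in the conditional kernel $Q(\cdot|\cdot)$, and the forward-KL distortion $\mathbb{E}_{P_S}[D_{KL}(\pi(\cdot|S)\|Q(\cdot|S))]=\sum_s P_S(s)\sum_a\pi(a|s)\log\frac{\pi(a|s)}{Q(a|s)}$ is convex in $Q$ as well, being a sum of terms $-\pi(a|s)\log Q(a|s)$ plus constants; hence the KKT conditions are necessary and sufficient for optimality. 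I would first set aside the trivial regime: if $\delta\ge I_\pi$, where $I_\pi:=I(S;A)$ computed under $P_{SA}=P_S\pi$, then taking $Q(a|s)=\bar{\pi}(a):=\sum_s P_S(s)\pi(a|s)$ already meets the constraint and gives $I(S;A)=0$, so the case of interest is $0\le\delta<I_\pi$, where the distortion constraint is tight with a strictly positive multiplier.

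Next I would form the Lagrangian, with multiplier $\lambda\ge 0$ for the distortion constraint and multipliers $\{\nu_s\}_{s\in\mathcal{S}}$ enforcing $\sum_a Q(a|s)=1$ for each $s$ (the constraints $Q(a|s)\ge 0$ are left implicit and verified a posteriori on the resulting expression). Differentiating with respect to $Q(a|s)$ and using the standard identity $\partial I(S;A)/\partial Q(a|s)=P_S(s)\log\frac{Q(a|s)}{\tilde{Q}(a)}$ — where the contributions coming from the dependence of $\tilde{Q}(a)=\sum_{s'}P_S(s')Q(a|s')$ on $Q(a|s)$ cancel exactly, as in the Blahut--Arimoto derivation — the stationarity condition reduces, after dividing by $P_S(s)$, to
\begin{equation*}
\log\frac{Q(a|s)}{\tilde{Q}(a)}-\frac{\lambda\,\pi(a|s)}{Q(a|s)}+c_s=0,
\end{equation*}
with $c_s$ a per-context constant determined by the normalization multiplier.

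The remaining step is to solve this transcendental equation for $Q(a|s)$. Substituting $u=\lambda\pi(a|s)/Q(a|s)$ turns it into $u\,e^{u}=\text{(const)}$, with the constant proportional to $\lambda\pi(a|s)/\tilde{Q}(a)$ (times $e^{c_s}$) and nonnegative, so $u=\mathbf{W}_0(\cdot)$ on the principal branch; substituting back, rolling $e^{c_s}$ into the per-context normalizer $Z$, and rearranging with the Lambert-$W$ identity $\mathbf{W}_0(x)e^{\mathbf{W}_0(x)}=x$, yields the claimed closed form $Q_\lambda(a|s)$. Then $Z=Z(s)$ is fixed by $\sum_a Q_\lambda(a|s)=1$, and $\lambda$ by tightness $D_{KL}(\pi\|Q_\lambda)=\delta$; uniqueness of such a $\lambda$ follows from monotonicity of the distortion along the family, which decreases from $I_\pi$ (as $\lambda\to 0$, where $Q_\lambda$ becomes context-free) down to $0$ (as $\lambda$ grows, where $Q_\lambda\to\pi$). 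Finally one checks $Q_\lambda(a|s)>0$ on the support of $\pi(\cdot|s)$, so that the forward KL is finite, treating the case $\pi(a|s)=0$ separately.

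The main obstacle is that this is an \emph{implicit} characterization: the marginal $\tilde{Q}(a)=\sum_s P_S(s)Q_\lambda(a|s)$ enters the formula for $Q_\lambda(a|s)$, so one must argue that a self-consistent solution exists — e.g.\ via a Blahut--Arimoto-type alternating scheme (optimize $Q$ with $\tilde{Q}$ frozen, then recompute $\tilde{Q}$) together with a fixed-point argument — and that the point obtained from the ``frozen $\tilde{Q}$'' stationarity condition is the genuine global minimizer of the convex program. The rest is more routine: the monotonicity and boundary analysis making $\lambda\mapsto\delta$ a bijection on $[0,I_\pi)$, and the algebraic bookkeeping in the Lambert-$W$ manipulation — confirming the argument stays in the domain of $\mathbf{W}_0$ and that the principal branch is the correct one throughout — to land precisely on the stated expression.
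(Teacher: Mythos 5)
Your proposal follows essentially the same route as the paper's proof in Appendix~\ref{subsub:kl_proof}: a Lagrangian stationarity condition for the inner minimization of the Blahut--Arimoto double-minimization form of Eq.~(\ref{eq:rate_dist}) with the marginal $\tilde{Q}$ held fixed, leading to the transcendental equation $\log\frac{Q(a|s)}{\tilde{Q}(a)}-\lambda\frac{\pi(a|s)}{Q(a|s)}=\text{const}$ solved via the Lambert function, with $\lambda$ fixed by tightness of the distortion constraint. Your additional remarks on convexity/KKT sufficiency, the trivial regime $\delta\ge I_\pi$, and the self-consistency of the implicit marginal $\tilde{Q}$ are refinements of, not departures from, the paper's argument (which handles the last point by invoking convergence of the Blahut--Arimoto iteration).
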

\begin{proof} 
	See Appendix~\ref{subsub:kl_proof}.
\end{proof}

We observe that the above compressed policies can achieve the minimum in Eq.~(\ref{eq:rate_dist}) as $N \rightarrow \infty$, characterizing the information-theoretic limit and serving as a lower bound for more practical schemes that can work for finite $N$.

\textbf{Remark.} We notice that, in general, $D_{KL}(p||q)$ always satisfies conditions $1)$ and $3)-5)$ defined in Sec.~(\ref{subsec:rate_distortion}), but not condition $2)$, i.e., it may be unbounded. We now define $\sigma_q := \min_{x \in \mathcal{X}: q(x) >0} q(x)$. If $p$ is absolutely continuous with respect to $q$, by the \textit{inverse Pinsker's inequality} we have
\begin{align*}
	D_{KL}(p||q) \leq \frac{2 \text{TV}(p,q)^2}{\sigma_q} \leq \frac{8}{\sigma_q} = D_{max},
\end{align*}
where $\text{TV}(p,q)$ indicates the total variation distance between the distributions $p$ and $q$. To meet this requirement, we further assume $\exists \epsilon >0 : \pi(a|s) \geq \epsilon$, $\forall s \in \mathcal{S}$ and $a \in \mathcal{A}$, whenever the compression scheme has to be applied. This additional assumption is reasonable in our context, as the target policy $\pi_j$ needs exploration during training. Once $\pi_j$ has converged to the optimal policy (see Appendix~\ref{proof:rate}), further compression would lead to sub-linear regret.

\subsection{Practical Coding Scheme}
\label{sub:coding_scheme}

To find practical coding schemes, we propose a solution that is based on the idea of context reduction, and computes compact context representations. In essence, the decision-maker constructs a message containing the new context representations $\hat{s}(s) \in \hat{\mathcal{S}}$ of $s$, one for each agent, and sends it over the channel. Once the agents have received the message, they can sample the arms according to a common policy $Q_{\hat{s}}(a|\hat{s})$, which is defined on the compressed context space $\mathcal{\hat{S}}$. If the rate constraint imposes $B$ bits per agent, it means that it is possible to transmit at most $2^B$ different contexts to each agent. The idea is to group the contexts into $2^B = M$ clusters $\hat{s}_1, \dots, \hat{s}_M$, minimizing $d(Q_{\hat{S}A}, P_{SA})$, where $Q_{\hat{S}A}$ is the new policy defined on the compressed contexts $\hat{s}(s) \in \hat{\mathcal{S}}$. Again, we avoid to explicitly write  the round index $j$, as the scheme does not depend on it.

To find the clusters and relative policies we employ the well-known Lloyd algorithm \cite{lloyd}, which is an iterative process to group states into clusters. First of all, knowing the policy $\pi$, the decision-maker maps each state $s \in \mathcal{S}$ into a $K$-dimensional point $\bm{\alpha^p} = \pi(\cdot|s) \in \Delta_K$, finding $|\mathcal{S}| = S$ different points $\bm{\alpha^1}, \dots, \bm{\alpha^S}$. Then, it generates $2^B=M$ random points $\bm{\mu^1}, \dots, \bm{\mu^M} \in \Delta_K$ as initial centroids, i.e., representative policies, and iterates over the following two steps:

\begin{enumerate}
	\item Assign to each point $\bm{\alpha^{p}}$ the class 
	$c^* \in \{1, \dots, M\}$ such that $c^* = \arg\min_c \mathbb{E}_{P_S} \left[D_{\alpha}(\bm{ \mu^c}, \bm{\alpha^p} )\right]$, i.e., minimizing the average $\alpha$-divergence between the representative $\bm{\mu^{c^*}}$ and the original policy, which is the point $\bm{\alpha^p}$. For each cluster $c$, we now denote by $\mathcal{S}_c$ the set containing the contexts associated to the policies in that cluster.
	\item Update $\bm{\mu^1}, \dots, \bm{\mu^M}$ such that
	$
	\bm{\mu^c} = \argmin_{\bm{\mu} \in \Delta_K} \sum_{s \in \mathcal{S}_c} P(s) D_{\alpha}( \bm{\mu}, \pi( \cdot | s)),
	$
	which is still a convex optimization problem when using $\alpha = 0$ or $\alpha = 1$, and can be solved applying the Lagrangian multipliers. The solution is 
	\begin{equation}\label{eq:centroids_rev_kl}
	\bm{\mu^c} = \frac{\prod_{s \in \mathcal{S}_c} \pi(\cdot | s)^{\frac{P(s)}{A\left(\mathcal{S}_c\right)}}}{Z},
	\end{equation}
	when using $D_{\alpha \rightarrow 0}(\bm{ \mu^c}, \pi(\cdot | s) ) = D_{KL}(\bm{ \mu^c} || \pi(\cdot | s))$, and
	\begin{equation}\label{eq:centroids_kl}
	\bm{\mu^c} = \frac{\sum_{s \in \mathcal{S}_c}P(s)  \pi(\cdot | s)}{Z},
	\end{equation}
	when using $D_{\alpha \rightarrow 1}(\bm{ \mu^c}, \pi(\cdot | s) ) = D_{KL}(\pi(\cdot | s) || \bm{ \mu^c})$. Here, the product is to be considered element-wise, $A\left(\mathcal{S}_c\right)$ is the sum of the contexts probabilities in $\mathcal{S}_c$, i.e., $A\left(\mathcal{S}_c\right) = \sum_{s \in \mathcal{S}_c} P(s)$, and $Z$ is the normalizing factor. After computing the new centroids, we go back to step 1). The derivations of Eq.~(\ref{eq:centroids_rev_kl}) and Eq.~(\ref{eq:centroids_kl}) are provided in Appendix~\ref{subsub:reverse_kl_proof} and Appendix~\ref{subsub:kl_proof}, respectively.
\end{enumerate} 

The process continues until the new solution does not decrease the average distortion between the cluster policies and the target ones.

\textbf{Observation.} Note that the controller is assumed to know the $M$ policies from which it samples the arms of the agents. This can be transmitted at the beginning of each round. In this case, the scheme is efficient as long as $N\log_2K \gg BP \log_2 K$, where $P$ is the number of bits used to represent the values of the \gls{pmf} $Q_{\hat{s}}(\cdot | \hat{s})$. For this reason, we provide a scheme where the new policy is updated not at every transmission, but only when the new target $\pi$ has changed considerably. In particular, if we denote with $\pi^{cls}$ the policy defined over the compressed context representation, with $\pi^{last} $ the last policy used to compute $\pi^{cls}$, and with $\pi$ the updated target policy, we compute and transmit $\pi^{cls}$ every time $D_{\alpha}(\pi^{last}, \pi)$ exceeds a threshold $\zeta$. 

\subsection{Policy Compression, Information Bottleneck and Maximum Entropy Reinforcement Learning}
\label{sub:ib_rl_me}

We discuss here the connection of the compression schemes presented above with two popular training strategies in \gls{rl}: the \textit{\gls{ib}} approach \cite{tishby_2000}, and \textit{\gls{merl}} \cite{Haarnoja2018}.

The \gls{ib} method is a popular way to train an \gls{rl} agent to maximize a target cumulative reward, with an additional regularization term which accounts for mutual information $I(S;A)$ between the states and actions. As done in \cite{goyal2018transfer, igl2019generalization, Federici2020LearningRR}, the target function $J(\pi)$ to be maximized with respect to policy $\pi$ is
\begin{align}
\label{eq:ib}
	J^{IB}(\pi) = \mathbb{E} \left[ R^{\pi}\right] - \beta I(S;A) = \mathbb{E} \left[ R_{\pi}\right] - \beta H(A) + \beta H(A|S) ,
\end{align}
inducing the policy $\pi$ to forget task-independent state information, achieving a better generalization performance. 

A different but similar concept is that of \gls{merl}, in which the \gls{rl} agent is trained to maximize the reward, together with a regularization term that is the entropy of the conditional policy given the state \cite{Haarnoja2018, Levine2018}, i.e., 
\begin{align}
\label{eq:merl}
	J^{MERL}(\pi) = \mathbb{E} \left[ R^{\pi}\right] + \beta H(A|S),
\end{align}
inducing the policy $\pi$ to more exploration. As proved in \cite{Levine2018}, \gls{merl} naturally appears in the \textit{Control as Inference} framework, in which a probabilistic view of the \gls{rl} problem is adopted. Specifically, when using a uniform prior distribution over the actions, inferring the optimal posteriors leads to the optimization object in Eq.~(\ref{eq:merl}). 

If we now write the distortion-rate function of the \gls{rccmab} problem, and solve it using the Lagrangian multiplier method, we end up minimizing the objective 
\begin{align}
	\mathcal{L}^{RC-CMAB}(Q) = d(Q_{SA}, \pi_{SA}) + \lambda I(S;A),
\end{align}
where the term $d(Q_{SA}, \pi_{SA})$ plays the role of $\mathbb{E} \left[ R^{\pi}\right]$, i.e., it is related to reward maximization by playing according to the optimized posterior, and $\lambda I(S;A)$ is implied by the constraint on the rate. However the difference is that, in our analysis, the parameter $\lambda$ is optimized to meet the rate constraint in each round, and so it is determined by the problem, rather than being a hyper-parameter to be fine tuned as in the other two formulations. Moreover, in Sec.~\ref{subsec:achievable_rate} we proved that, if $\lambda$ is such that $R_{Q_{\lambda}} = I_{Q_{\lambda}}(S;A) < R_{\pi^*} = H(A^*)$, then the learning algorithm cannot achieve sub-linear regret. Consequently, it is important to carefully tune the coefficient $\beta$ when optimizing the policy with the \gls{ib} and \gls{merl} methods. This connection will be also highlighted by the results of the experiments in Sec.~\ref{sec:numerical_results}.

\section{Numerical results }
\label{sec:numerical_results}

We now analyze the \gls{rccmab} problem presented in Sec.~\ref{sec:problem_formulation}, and apply both the theoretical and practical policy compression schemes to solve it. In particular, we compare the performance of the \textit{Perfect} agent, which applies \gls{ts} without any rate constraint, and thus admits samples from the true posterior $\pi$, with the performance of the \textit{rate-constrained} algorithms \textit{Comm R-KL}, \textit{Comm F-KL}, \textit{Cluster R-KL}, and the \textit{Cluster F-KL} agents. The \textit{rate-constrained} agents adopt \gls{ts} at the decision-maker and, when the rate is not sufficient, transmit a compressed policy $Q$ with the different schemes. Specifically, the \textit{Comm R-KL} agent uses the optimal scheme with reverse KL divergence, provided in Sec.~\ref{subsec:compression_rev_kl}; the \textit{Comm F-KL} agent uses the forward KL divergence as explained in Sec.~\ref{subsec:compression_forward_kl};
the \textit{Cluster R-KL} and \textit{Cluster F-KL} agents implement the practical coding scheme provided in Sec.~\ref{sub:coding_scheme}. In every experiment, the context distribution $P_S$ is uniform over the $S=16$ contexts, and there are $K=16$ feasible arms, $N=50$ agents, and the total number of system rounds is $J=400$. In all the experiments the environment is such that, for each context $s \in \mathcal{S} = \left\{ 0, \dots, 15 \right\}$, the best average reward is given by the arm $a \in \mathcal{A} = \left\{ 0, \dots, 15 \right\}$ such that $a = \lfloor \frac{s}{G} \rfloor$, where $G$ is an experiment parameter. In particular, the reward behind arm $a$ with context $s$ is a Bernoulli random variable with parameter $\mu(s, a) = 0.8$ if $a = \lfloor \frac{s}{G} \rfloor$, and $\mu(s,a)$ is sampled uniformly in  $[0, 0.65]$ otherwise. The average rewards for sub-optimal arms are randomly generated at the beginning of each experiment's run.

This set of parameters allows us to study the performance of the different compression schemes, as the degree of correlation between optimal action distributions and contexts varies. We do not expect notable changes by varying the number of arms and/or reward distributions. For example, increasing the number of arms, or decreasing the gaps between optimal and sub-optimal arms' average rewards, would lead to longer training processes for any algorithms \cite{lattimore_szepesvári_2020}, without affecting the performance of compressed policies when compared to uncompressed ones.

\subsection{Optimal Rate Constraint}
\label{subsec:experiment_optimal_rate}

\begin{figure}[t]
	\centering
	\begin{subfigure}[b]{0.49\textwidth}
		\centering
		\includegraphics[width=\textwidth]{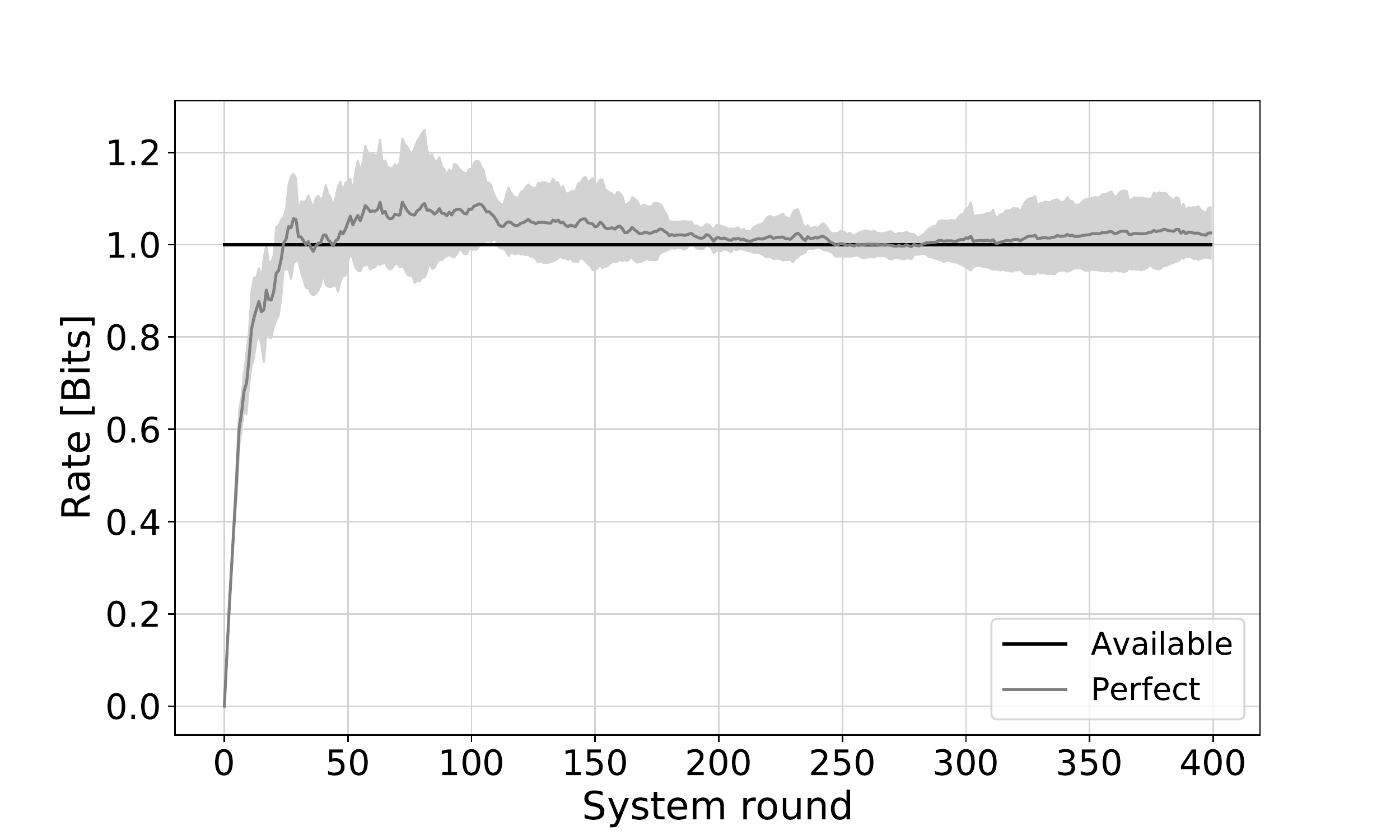}
		\caption{}
		\label{fig:rate_bottleneck_8}
	\end{subfigure}
	\hfill
	\begin{subfigure}[b]{0.49\textwidth}
		\centering
		\includegraphics[width=\textwidth]{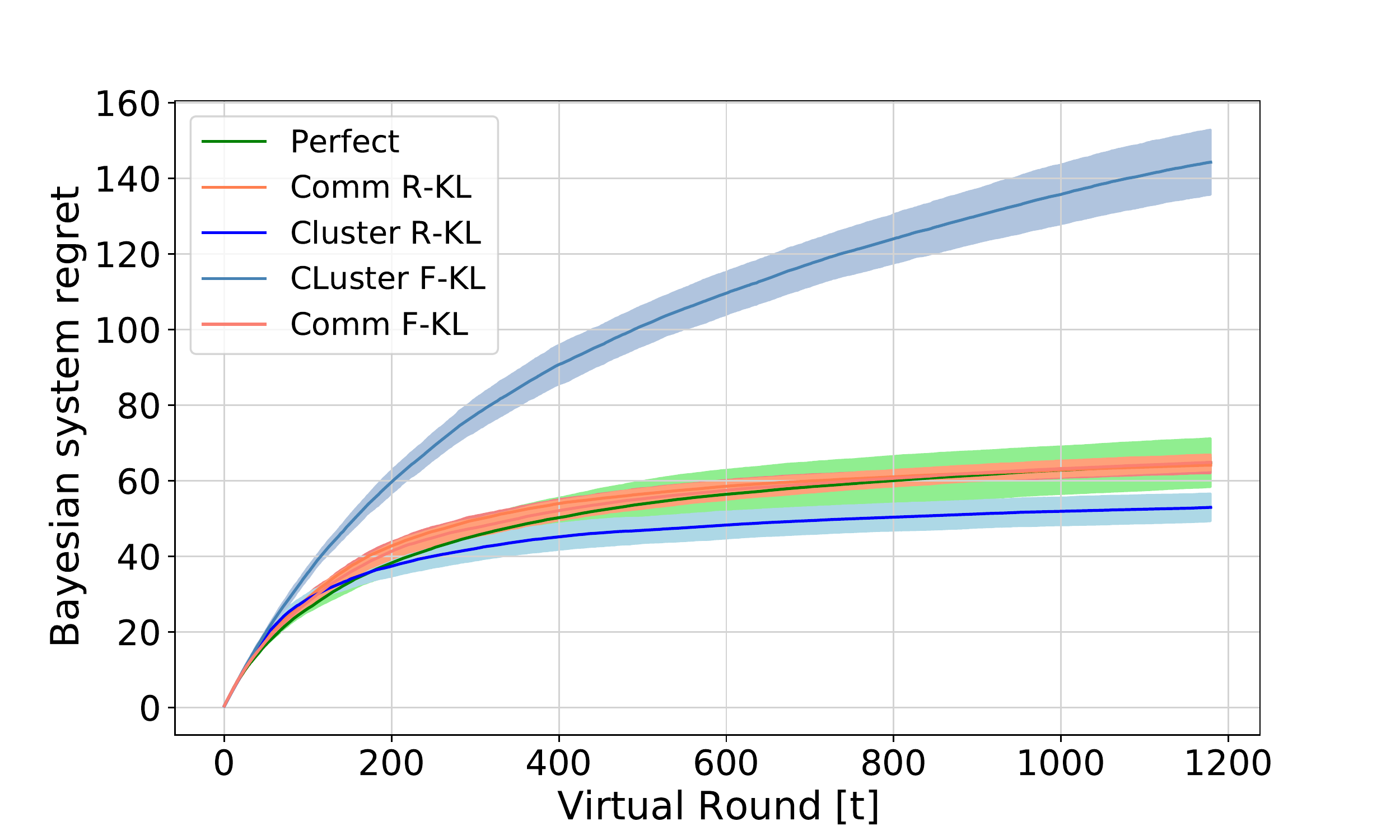}
		\caption{}
		\label{fig:regret_bottleneck_8}
	\end{subfigure}
	\caption{Rate $R_{\pi}$ needed to reliably transmit the policy $\pi$ as a function of the system round $j$, together with the imposed rate $R$ (a); average state regret obtained by the different algorithms as a function of the virtual round $t$ (b). In this case, $G=8$. Curves indicates average rewards $\pm$ one standard deviation over $5$ runs.}
	\label{fig:bottleneck_8}
\end{figure}

In the first experiment, we set $G=8$, meaning that the arm with the highest expected reward is $a=0$ in the first $8$ contexts, and $a=1$ with $s \in \left\{ 8, \dots, 15 \right\}$. Consequently, the rate of the optimal policy is $R_{\pi^*} = H(A^*) = 1$ bit, given that $P_S$ is uniform. The maximum available rate $R$ is set to $1$ bit for all the agents but the \textit{Perfect} one. Specifically, the agents \textit{Comm R-KL} and \textit{Comm F-KL} adopt a rate $R_{Comm} = \min \left\{ 1, R_{\pi}\right\}$. Whenever $R_{\pi} > 1$, the compression schemes explained in Sec.~\ref{sec:policy_compression} are adopted. The cluster agents use $1$ bit in all the rounds, whereas the \textit{Perfect} agent can always use $R_{\pi}$. The results are presented in Fig.~\ref{fig:bottleneck_8}. 

As we can see from Fig.~\ref{fig:rate_bottleneck_8}, as the learning process goes on, the required rate to transmit the \gls{ts} policy increases, as the mutual information between $S$ and $A$ increases. Moreover, the rate $R_{\pi}$ converges to $1$. However, it is interesting to observe Fig.~\ref{fig:regret_bottleneck_8}, that reports the average state regret as a function of the virtual round index $t = 1, \dots, JN$. The agents \textit{Comm R-KL}, \textit{Comm F-KL} and \textit{Perfect} all achieve similar performance. However, agent \textit{Cluster F-KL} is not able to converge to the optimal policy. Surprisingly, the best performing agent is the more practical \textit{Cluster R-KL}. We argue that this is because it is the agent that makes better use of the \gls{ib} trick, as $16$ different policies are not necessary to represent the optimal responses, given that $2$ different distributions are sufficient. We show thus that, when the hyper-parameter $\beta$ in Eq.~(\ref{eq:ib}) is optimally tuned, i.e., such that $I(S;A) = H(A^*)$, it is possible to gain in performance when adding the regularization term.

\subsection{Training with Rate Constraint}
\label{subsec:experiment_rate_constraint}

\begin{figure}[t]
	\centering
	\begin{subfigure}[b]{0.49\textwidth}
		\centering
		\includegraphics[width=\textwidth]{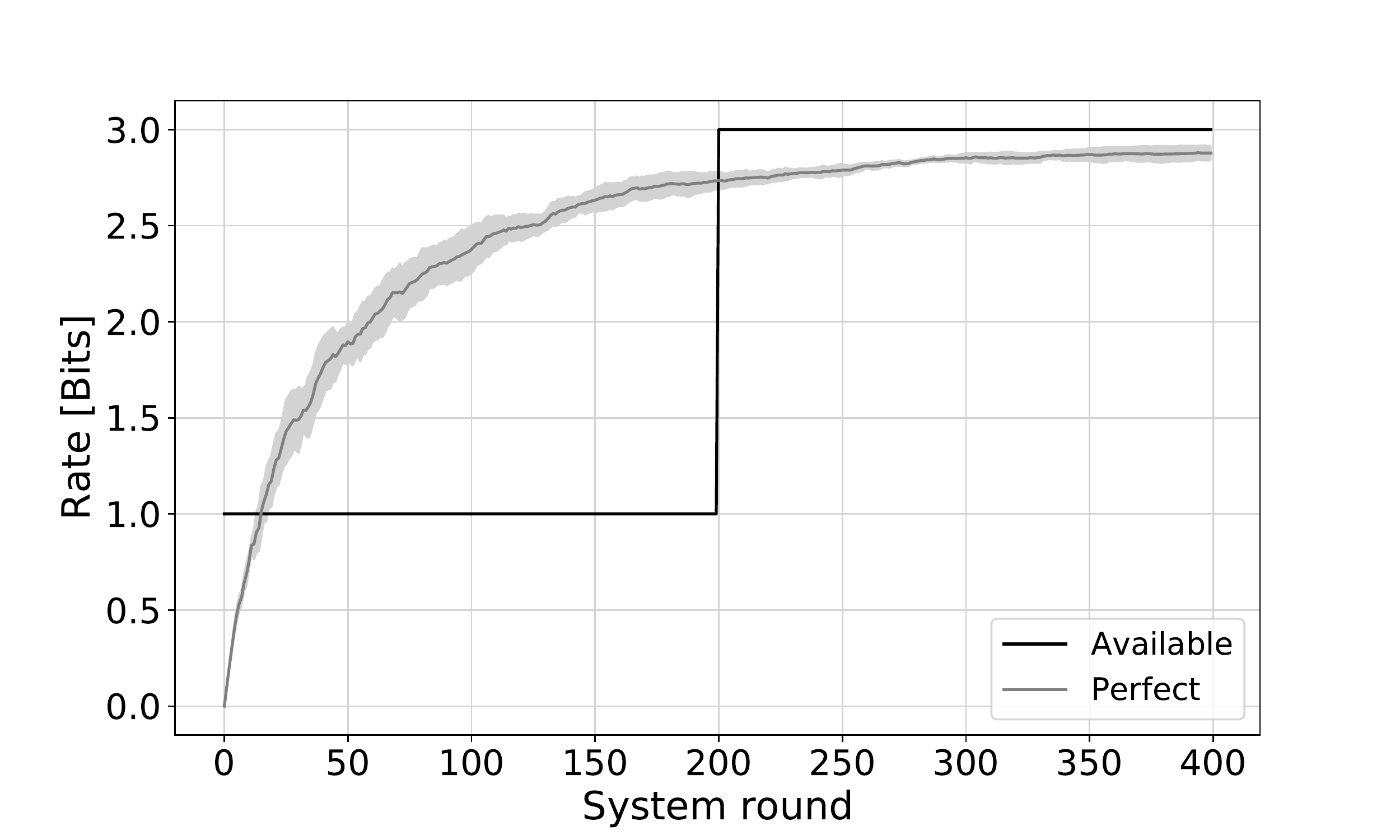}
		\caption{}
		\label{fig:rate_bottleneck_2}
	\end{subfigure}
	\hfill
	\begin{subfigure}[b]{0.49\textwidth}
		\centering
		\includegraphics[width=\textwidth]{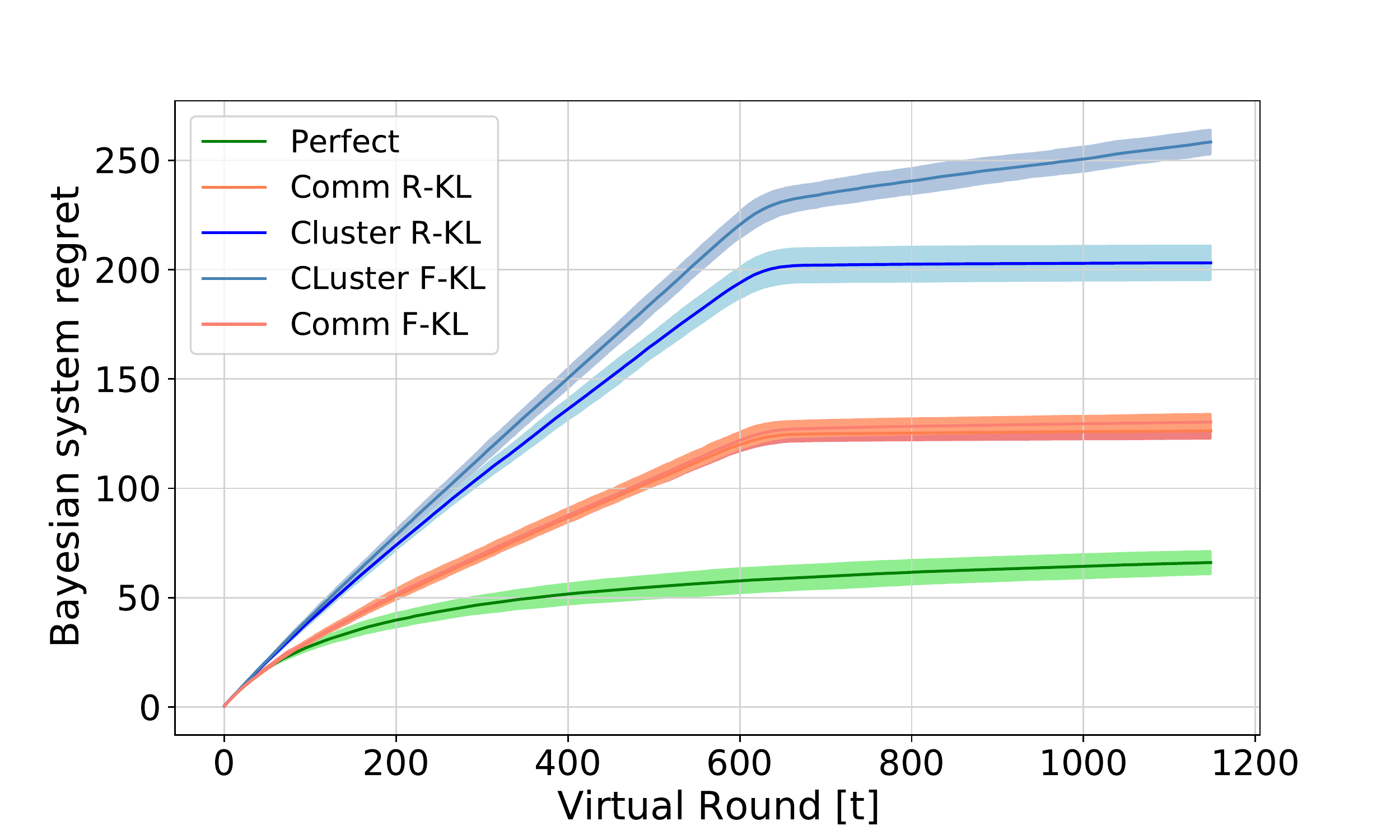}
		\caption{}
		\label{fig:regret_bottleneck_2}
	\end{subfigure}
	\caption{Rate $R_{\pi}$ needed to reliably transmit the policy $\pi$ as a function of the system round $j$, together with the imposed rate $R$ (a); average state regret obtained by the different algorithms as a function of the virtual round $t$ (b). In this case, $G=2$. Curves indicates average rewards $\pm$ one standard deviation over $5$ runs.}
	\label{fig:bottleneck_2}
\end{figure}

In this second experiment, we set $G=2$, and $R=2$ for the first $200$ system rounds, and $R=3$ for the remaining $200$ rounds. Given $G=2$, we now have $R_{\pi^*} = 3$ bits, and so the constraint can potentially damage the training process of the rate-constrained agents. As we can see from Fig.~\ref{fig:regret_bottleneck_2}, the \textit{Perfect} agent can easily converge obtaining sub-linear regret. The regret of the rate-constrained agents grows linearly as their rate is imposed to $R=1$. However, we can see that the two agents that theoretically-optimally trade rate with policy distortion, present the smallest slope in the regret curve. Again, the \textit{Cluster R-KL} agent outperforms \textit{Cluster F-KL}, and achieves almost zero per-round regret as soon as $R$ jumps to $3$, meaning that the posteriors converged to the optimal ones even when sampling with $R=1$. This is not true for \textit{Cluster F-KL}, as it presents larger slope in the regret curve, and the regret keeps growing even when $R=3$. These observations are consistent with the analysis in \cite{approx_ts}.

\subsection{Practical Coding Scheme}
\label{subsec:experiment_coding_scheme}

This last experiment presents the effectiveness of the practical coding schemes described in Sec.~\ref{sub:coding_scheme}, in the most complex case $G=1$, i.e., the association between context and optimal arm is a one-to-one map, and the optimal policy $\pi^*$ needs $R_{\pi^*} = 4$, which is also equal to the context entropy. Consequently, in this case the maximum-complexity policy, i.e., a different arm distribution for each context, is needed \cite{Lai2021}, and so the \gls{ib} should be carefully used, and $\beta = 0$ in the end is needed, as no compression scheme can achieve sub-linear regret with $R < 4$ bits. In this case, the number of bits the \textit{Cluster} agents can use is set to $R_{Cluster} = \lceil R_{\pi} \rceil$. What is interesting to notice here is that, in the long run, the \textit{Cluster F-KL} agent outperforms \textit{Cluster R-KL}, as it is able to converge to the optimal scheme by introducing more exploration. Indeed, even if in the first $400$ virtual rounds the \textit{Cluster R-KL} regret curve is below that for \textit{Cluster F-KL}, from $t \sim 600$ the \textit{Cluster F-KL} agent presents better performance. The reason is that, in this case, more exploration should be encouraged in the first steps to discover optimal policies, as done in \gls{merl}, by sampling sub-optimal arms more frequently. 

\begin{figure}[t]
	\centering
	\begin{subfigure}[b]{0.49\textwidth}
		\centering
		\includegraphics[width=\textwidth]{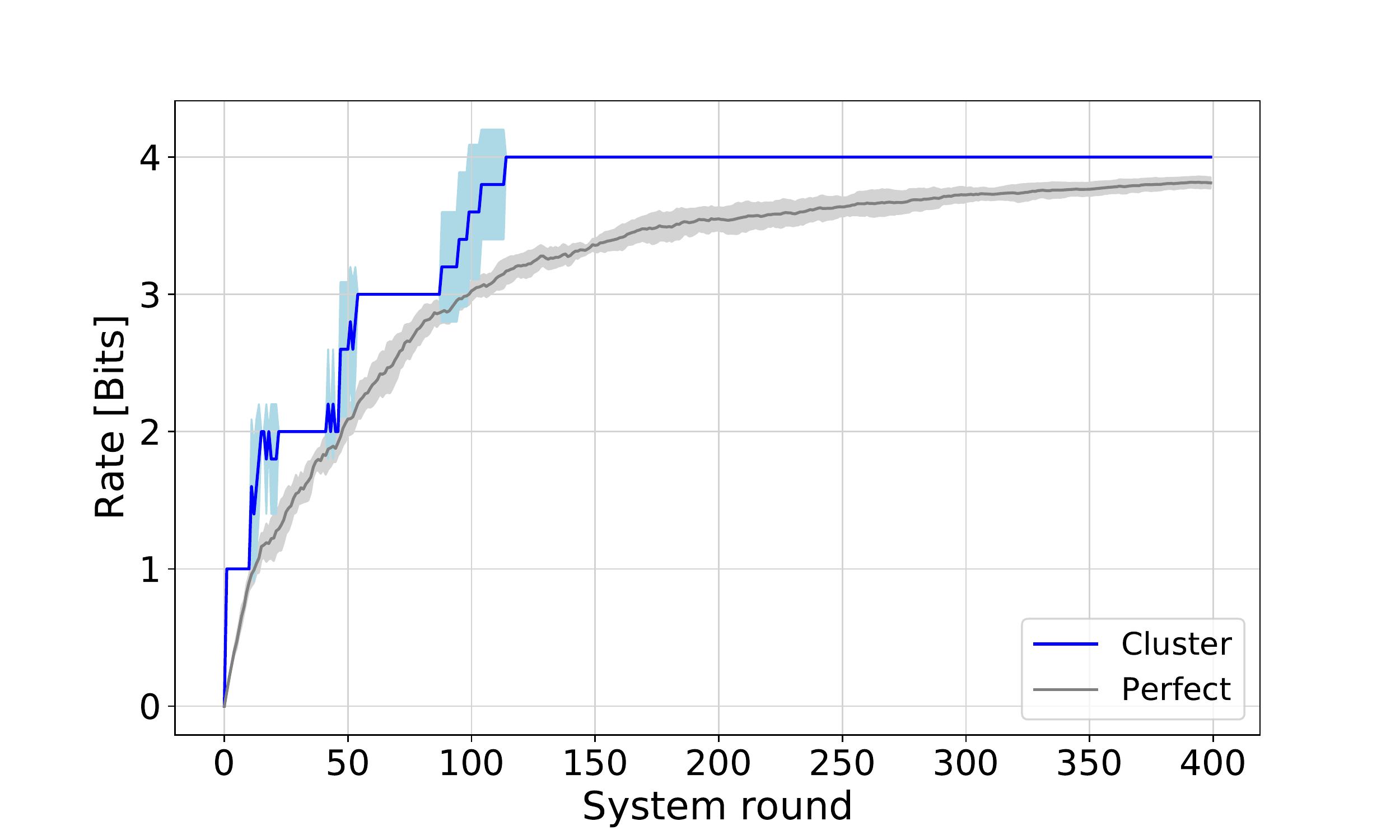}
		\caption{}
		\label{fig:rate_practical_scheme_1}
	\end{subfigure}
	\hfill
	\begin{subfigure}[b]{0.49\textwidth}
		\centering
		\includegraphics[width=\textwidth]{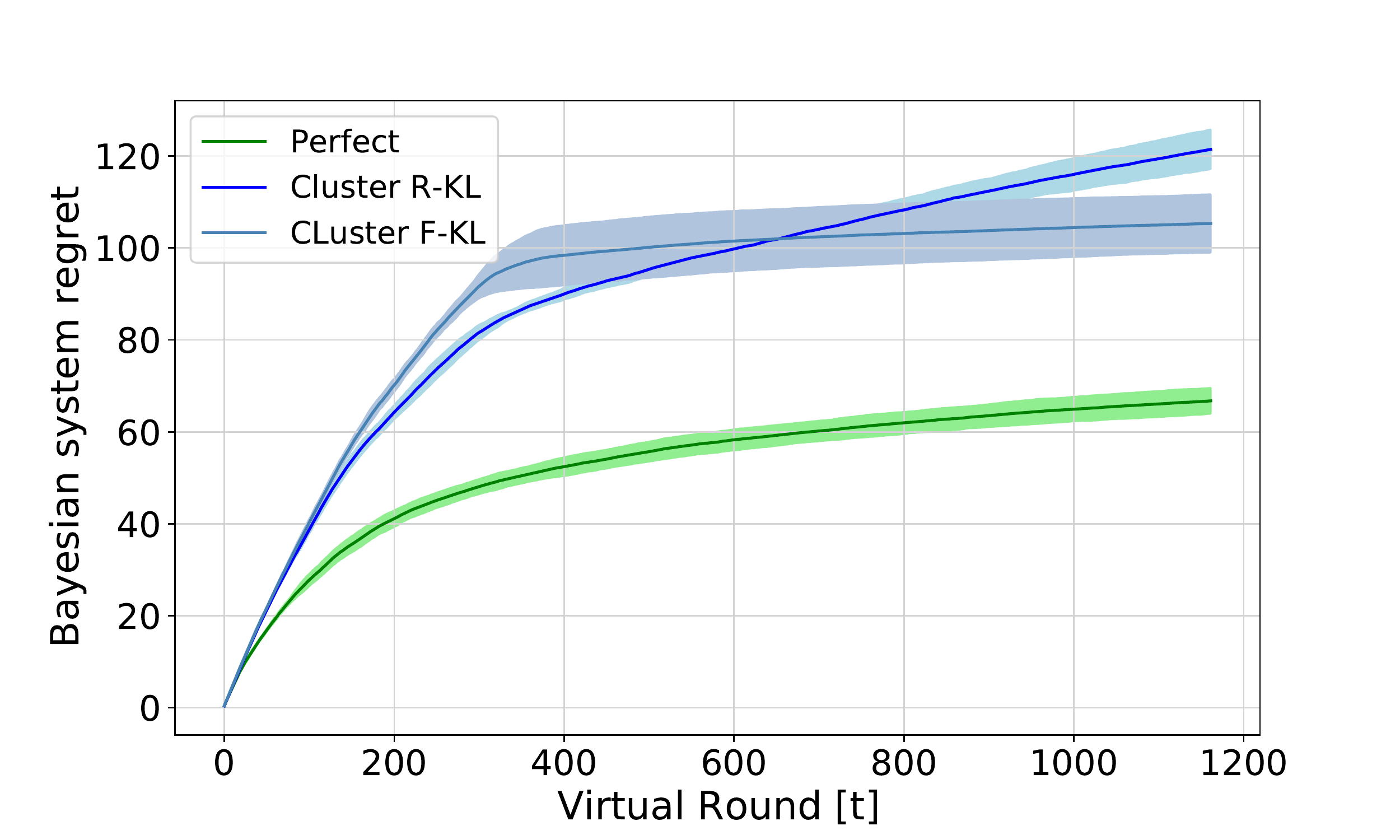}
		\caption{}
		\label{fig:regret_practical_scheme_1}
	\end{subfigure}
	\caption{Rate $R_{\pi}$ needed to reliably transmit the policy $\pi$ as a function of the system round $j$, together with the imposed rate $R$ (a); average state regret obtained by \textit{Perfect} and \textit{Cluster} algorithms, as a function of the virtual round $t$ (b). In this case, $G=1$. Curves indicates average rewards $\pm$ one standard deviation over $5$ runs.}
\end{figure}

\section{Conclusion}
\label{sec:conclusion}

In this paper, we have proposed and studied a novel rate-constrained remote RL problem, called \gls{rccmab}. We first proved the performance of the \gls{ts} strategy when no rate-constrained is imposed, and the information-theoretic limit for the rate needed to achieve sub-linear regret. We then studied schemes to compress the decision-maker's policy whenever the available rate is not sufficient to reliably transmit the intended actions to the controller. We considered $\alpha$-divergence as the distortion metric in the rate-distortion problem, and provided the shape of the policies that theoretically optimize the rate-distortion trade-off in close-form, in the cases $\alpha \rightarrow 0$, and $\alpha \rightarrow 1$, which lead to the reverse and forward KL divergences, respectively. We further proposed a practical compression scheme that relies on the idea of context clustering, and can be adopted to minimize the two analyzed divergence functions. The numerical results confirmed the limit on the achievable rate, and the performance gain that can be achieved when using proper compression strategies in the rate-constrained cases. We further connected and discussed with experiments the relation between our policy compression schemes and the \textit{information bottleneck} approach in \gls{rl}. 

Future steps include the adoption of more advanced practical compression algorithms that reduce the gap between the cluster and optimal schemes. Moreover, the extension of the problem in the context of more general remote \gls{rl} problems, where the next state depends on the current state and the action taken, is an interesting and challenging direction, as modern algorithms involve the parallelization of the training process exploiting a multitude of agents interacting with many replicas of the same environment.

\bibliography{biblio}
\bibliographystyle{IEEEtran}

\newpage

\appendix

\subsection{Regret Analysis}
\label{proof:ts_regret}

In this section we prove Theorem~\ref{thm:sys_regret}. As explained Sec.~\ref{subsec:asymptotic_bounds}, we align the interactions between agents and environment in time, and consider virtual rounds $t \in \{ 1, \dots, J\cdot N\} = \mathcal{T} $, and rewrite the Bayesian system regret as follows

\begin{align*}
\text{BR}(\pi^{TS}, J) &= \mathbb{E} \left[ \sum_{n \in \mathcal{N}}
\sum_{j=1}^J \mu(s_{j,n}, a^*(s_{j,n}^n)) - \mu\left(s_{j,n},  A_{j,n} \right)\right] \\
& = \mathbb{E} \left[
\sum_{t=1}^{NJ} \mu(s_{t}, a^*(s_{t})) - \mu\left(s_t, A_t \right)\right] \\
& = \mathbb{E} \left[ \sum_{s \in \mathcal{S}} \sum_{t_s \in \mathcal{T}_s} \mu(s_{t_s}, a^*(s_{t_s})) - \mu\left(s_{t_s}, A_{t_s} \right)\right] \\
\end{align*}
where $\mathcal{T}_s = \{ t \in \mathcal{T} : s_t = s\}$, i.e., all time-steps where context $s$ was sampled, and $| \mathcal{T}_s| = T_s$.  We now define the upper and lower bounds for the reward average $\mu(s, a)$, which hold with high probability, and are used to bound the average per-round regret
\begin{align}
\label{eq:bounds}
U_t(s, a) &= \hat{\mu}_{t-1}(s, a) + \sqrt{\frac{2 + 6 \log T_s}{\phi_{t-1}(s, a)}} \\
L_t(s, a) &= \hat{\mu}_{t-1}(s, a) - \sqrt{\frac{2 + 6 \log T_s}{\phi_{t-1}(s, a))}}
\end{align} 
where $\hat{\mu}_t(s, a)$ is the empirical average return of $a$ with context $s$ at time $t$, and $\phi_{t}(s, a)$ is related to the number of times arm $a$ has been pulled until time $t$ with context $s$, and will be better explained later. We now use \textit{Proposition 1} from \cite{Russo2014}, which allows us to write
\begin{align}
	\text{BR}(\pi^{TS}, T) = \mathbb{E} \left[ \sum_{t \in \mathcal{T}} U_{t}\left(s_t, A_{t}\right) - \mu \left(s_t, A_{t}\right)\right] + \mathbb{E} \left[ \sum_{t \in \mathcal{T}} \mu \left(s_t, a^*(s_t)\right) - U_t\left(s_t, a^*(s_t)\right)\right],
\end{align}
and apply \textit{Proposition 2} in \cite{Russo2014} obtaining, whenever $T > SKN$,
\begin{align}
& \text{BR}(\pi^{TS}, T) \leq \sum_{s \in \mathcal{S}} \sum_{t_s \in \mathcal{T}_s} \mathbb{E} \left[  U_{t_s}\left(s, A_{t_s}\right) - L_{t_s} \left(s, A_{t_s} \right) \right] + SKN.
\end{align}
 
We now consider the single context regret. For every $s \in \mathcal{S}$, we define $\mathcal{T}_s^a = \{ t_s \in \mathcal{T}_s : A_{t_s} = a\}$ and $|\mathcal{T}_s^a| = T_s^a$, bounding the quantity

\begin{align*}
\sum_{t_s \in \mathcal{T}_s}  U_{t_s}\left(s, A_{t_s}\right) - L_{t_s}\left( s, A_{t_s}\right) &= \sum_{a \in \mathcal{A}} \sum_{t_s^a \in  \mathcal{T}_s^a}  
U_{t_s^a} (s, a) - L_{t_s^a} (s, a) \\
& \stackrel{(a)}{\leq} \sum_{a \in \mathcal{A}} \left[ 1 + 2\sqrt{2 + 6 \log T_s} \sum_{t_s^a \in \mathcal{T}_s^a} (1 + \phi_{t_s^a-1}(s, a)) ^{-\frac{1}{2}} \right]
\end{align*}
Here (a) is a consequence of the bounds defined in Eq.~(\ref{eq:bounds}). In standard \gls{mab} analysis, as in \cite{Russo2014}, $\phi_t(s, a)$ is basically the number of times arm $a$ has been sampled up to time $t$ with context $s$, that we denote with $C_t(s, a)$. This quantity is critical to bound the intervals $U_t \left(s, a\right) - L_t\left( s, a \right)$. However, in our formulation, we define $\phi_t(s, a) = C_{j(t)}(s, a)$, where 
$$
j(t) = \floor{t/N} \cdot N
$$
is the last time the policy has been updated, i.e., at the end of the previous round.  Now, $\phi_t(s, a) \geq C_{t}(s, a) - N$, since it is not possible to sample more than $N$ times the same arm in one round, given that the number of agents is $N$. However, the quantity $ (1 + C_{t}(s, a) - N)^{-\frac{1}{2}}$ is not defined for $C_{t}(s, a) < N$. If $C_{t}(s, a) < N$, we can always write $ \phi_t(s, a) \geq \frac{C_{t}(s, a)}{N} + 1$.

\subsubsection{Finite-Time Analysis}

With the observations above, we can now prove an upper bound for the finite-time regret. We first notice that 
\begin{align*}
\sum_{t_s^a \in \mathcal{T}_s^a} (1 + \phi_{t_s^a-1}(s, a))^{-\frac{1}{2}} & \leq \sum_{t_s^a \in \mathcal{T}_s^a} \left(2 + \frac{C_{t_s^a-1}(s, a)}{N}\right) ^{-\frac{1}{2}} \\
&= \sum_{j=0}^{T_s^a-1} \left(2 + \frac{j}{N}\right) ^{-\frac{1}{2}} \\
&\leq\sum_{j=0}^{T_s^a-1} \left(\frac{j+1}{N}\right) ^{-\frac{1}{2}} \\
&=\sum_{j=1}^{T_s^a} \left(\frac{j}{N}\right) ^{-\frac{1}{2}}. 
\end{align*}
We can now use the integral bound to find
$$
\sum_{j=1}^{T_s^a} \left(\frac{j}{N}\right) ^{-\frac{1}{2}} \leq \sqrt{N}\int_{\tau=0}^{T_s^a} \tau ^{-\frac{1}{2}} d\tau = 2\sqrt{N}\sqrt{T_s^a}
$$
If we put all together, we obtain
\begin{align*}
\text{BR}(\pi, T) & \leq \sum_{s \in \mathcal{S}} \sum_{a \in \mathcal{A}} 1 + 4\sqrt{2+6\log T_s}\sqrt{N} \sqrt{T_s^a} \\
& \leq \sum_{s \in \mathcal{S}} K + 4\sqrt{2+6\log T_s}\sqrt{N} \sum_{a \in \mathcal{A}} \sqrt{T_s^a} \\
& \stackrel{(a)}{\leq} \sum_{s \in \mathcal{S}} K + 4\sqrt{(2+6\log T_s) N K\sum_{a \in \mathcal{A}} T_s^a} \\
& = KS + \sum_{s \in \mathcal{S}} 4 \sqrt{(2+6 \log T_s)K N T_s} \\
& \leq KS + 4 \sqrt{(2+6 \log T)K N} \sum_{s \in \mathcal{S}} \sqrt{T_s} \\
& \stackrel{(b)}{\leq} 2KS + 4 \sqrt{(2+6 \log T) K N S T}.
\end{align*}

The equalities (a) and (b) come from the Cauchy-Schwartz inequality, and the definitions of $T_s^a$, $T_s$, and $T$.  
%
%

\subsubsection{Asymptotic Regret Analysis}

However, in the asymptotic case $N \rightarrow \infty$, we get rid of the first constant terms when arms are pulled less than $N$ times. Consequently, we can use the bound $\phi_t(s, a) \geq C_{t}(s, a) - N$, and follow the same analysis for the finite-time case. We can see that the factor $T$ does not scale with $N$, obtaining 
\begin{align*}
\text{BR}(\pi, T) \in \mathcal{O}\left( \sqrt{KTS\log T}\right).
\end{align*}


\subsection{Achievable Rate}
\label{proof:rate}

In this section we provide the detailed proofs of Lemma~\ref{lemma:min_rate} and Lemma~\ref{lemma:achivable_rate}. To this end, we denote by $H_{q}(X)$ and $I_q(X;Y)$ the entropy and the mutual information computed with respect to the probability distribution $q$. Again, as introduced in Sec.~\ref{subsec:achievable_rate}, $H(A^*)$ is the entropy of the optimal actions, i.e., computed with respect to $\pi^*(a) = \sum_{s \in \mathcal{S}} P(s) \pi(a^*|s)$, where $\pi^*(a|s)$ is the optimal policy in context $s$.

We start by proving the following lemma.

\begin{lemma}[]
	\label{lemma:rate}
	Assuming that Thompson Sampling policy $\pi_j(a|s)$ achieves sub-linear Bayesian system regret, then $\lim_{j \rightarrow \infty} \text{I}_{\pi_j}(S;A) = \lim_{j \rightarrow \infty} H_{\pi_j}(A) = H(A^*)$.
\end{lemma}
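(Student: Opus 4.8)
The plan is to deduce everything from a single fact: sub-linear Bayesian system regret forces the Thompson Sampling policy to converge, almost surely, to the deterministic optimal policy in every context, i.e.\ $\pi_j(a\mid s)\to\pi^*(a\mid s)=\mathbf 1[a=a^*(s)]$ for all $s\in\mathcal S$ and $a\in\mathcal A$; the two stated limits then follow by continuity of entropy on the finite simplex. I assume throughout, as is generic for exponential-family rewards, that the optimal arm $a^*(s)$ is unique in each context.

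\textbf{Step 1 (the crux).} Set $\Delta_{\min}=\min_s\min_{a\ne a^*(s)}\big(\mu(s,a^*(s))-\mu(s,a)\big)>0$. Expanding the Bayesian system regret as in Appendix~\ref{proof:ts_regret}, the expected regret incurred in round $j$ is at least $\Delta_{\min}\,N\sum_s P_S(s)\,\mathbb E\big[1-\pi_j(a^*(s)\mid s)\big]$, because every non-optimal arm carries gap at least $\Delta_{\min}$. Hence sub-linear regret, i.e.\ $\mathrm{BR}/(NJ)\to0$, gives $\tfrac1J\sum_{j\le J}\mathbb E\big[1-\pi_j(a^*(s)\mid s)\big]\to0$ for every $s$. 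To upgrade this Cesaro statement to a genuine limit I use the structure of TS: since $P_S(s)>0$, context $s$ recurs infinitely often (Borel--Cantelli), and the finite-time analysis underlying Theorem~\ref{thm:sys_regret} (cf.\ \cite{Russo2014,approx_ts}) shows that within each context every arm is pulled unboundedly often; under the exponential-family assumption the posteriors $p_j^{s,a}$ then concentrate at the true means, $\hat\mu_j(s,a)\to\mu(s,a)$ a.s.\ for all $(s,a)$, and therefore $\pi_j(a\mid s)=\Pr\big(\hat\mu_j(s,a)=\max_{a'}\hat\mu_j(s,a')\mid H(j-1)\big)\to\mathbf 1[a=a^*(s)]$ a.s.

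\textbf{Step 2 (book-keeping).} Since $\pi_j(\cdot\mid s)\to\delta_{a^*(s)}$ and Shannon entropy is continuous on $\Delta_K$, $H_{\pi_j}(A\mid S)=\sum_s P_S(s)\,H\big(\pi_j(\cdot\mid s)\big)\to0$; and the marginal $\pi_j(a)=\sum_s P_S(s)\pi_j(a\mid s)\to\sum_s P_S(s)\,\mathbf 1[a=a^*(s)]=\pi^*(a)$, so $H_{\pi_j}(A)\to H(\pi^*)=H(A^*)$. Combining, $I_{\pi_j}(S;A)=H_{\pi_j}(A)-H_{\pi_j}(A\mid S)\to H(A^*)$, as claimed. (All limits above are almost sure; by bounded convergence they also hold in $L^1$, which covers the reading of $\lim_j$ as a limit of expected quantities.)

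\textbf{Main obstacle.} The only delicate point is Step 1 --- turning ``sub-linear regret'' into ``the realized posterior means converge to the truth in every context.'' This rests on the TS-specific exploration fact that every arm is sampled infinitely often (not implied by sub-linear regret alone) together with the exponential-family posterior-consistency argument of \cite{approx_ts}. A secondary subtlety is ties: with several optimal arms in a context, $\pi_j(\cdot\mid s)$ need not converge to a point mass, but its entropy still converges to the logarithm of the number of optimal arms and the computation goes through with $\pi^*$ redefined accordingly; stating the lemma under unique optima avoids this.
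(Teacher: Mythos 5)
Your proof is correct and follows essentially the same route as the paper's: both reduce the lemma to the single fact that sub-linear regret forces the TS policy to converge to the deterministic optimal policy in every context, and then conclude via $I_{\pi_j}(S;A)=H_{\pi_j}(A)-H_{\pi_j}(A|S)$, $H_{\pi_j}(A|S)\to 0$, and continuity of entropy for $H_{\pi_j}(A)\to H(A^*)$. The only difference is that the paper establishes the policy-convergence step by directly citing Theorem 2 of an external reference, whereas you sketch it yourself via posterior concentration; since your sketch ultimately leans on the same externally-sourced exploration and consistency facts (every arm pulled infinitely often, exponential-family posterior consistency), the two arguments are substantively the same.
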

\begin{proof}
	First of all we notice that, in order to achieve sub-linear Bayesian system regret, it is necessary to achieve sub-linear regret in all contexts $ s \in \mathcal{S}$, given the assumption that $\forall s \in \mathcal{S} ~ P_S(s) > 0$. We then write $I_{\pi_j}(S;A) = H_{\pi_j}(A) - H_{\pi_j}(A|S)$. Following Theorem 2 from \cite{ayfer2020}, if $\pi_j(a|s)$ achieves sub-linear regret, then $\forall s \in \mathcal{S}$
	\begin{align*}
	\lim_{j \rightarrow \infty} \pi_j(a = a^*|s) &= 1  \quad \text{when $a^*$ is the optimal arm}, \\
	\lim_{j \rightarrow \infty} \pi_j(a = a'|s) &= 0  \quad \text{when $a' \neq a^*$}.
	\end{align*}
	Consequently, in the limit, $\pi_j(a|s)$ is a deterministic function, thus
	\begin{align*}
	\lim_{t \rightarrow \infty} H_{\pi_t}(A^* | S) = 0,
	\end{align*}
	which concludes our proof.
\end{proof}
We start by proving Lemma~\ref{lemma:achivable_rate}, which we repeat below.

\textbf{Lemma~\ref{lemma:achivable_rate}}
If $R > H(A^*)$, then it is possible to achieve sub-linear Bayesian system regret, in the limit $N \rightarrow \infty$.
\begin{proof}

	We denote with $R_{\pi_j}$ the rate needed to perfectly convey the \gls{ts} policy to the controller at round $j$, and let $\epsilon > 0$ s.t. ${R = H(A^*) + \epsilon}$, where $R$ is the available communication rate. We now provide a scheme that guarantees sub-linear regret.
	
	If the policy $\pi_j$ generated from \gls{ts} has $R_{\pi_j} \leq H(A^*)$ $\forall j = 1, \dots, J$, then Theorem~\ref{thm:sys_regret} ensures sub-linear Bayesian system regret if sampling with $\pi_j$. If $\exists j$ such that $R_{\pi_j} > R$, generate parameters $\rho_j$ such that $\rho_j \in o(1)$ and $\sum_{j=1}^{\infty} \rho_j = \infty$, as explained in \cite{approx_ts}, Theorem 3. Then, with probability $\rho_t$ play $a$ uniformly at random, and with probability $1-\rho_t$, play according to a policy $Q_j(a|s)$, which satisfies the rate-distortion constraint $I_{Q(a|s)}(S;A) \leq R$, which can be transmitted to the controller, using as distortion measure the reverse KL divergence $d(Q_{SA}, \pi_{SA}) = \mathbb{E}_{P_S} \left[ D_{KL}(Q(\cdot || S) || \pi(\cdot || S))\right]$. Following Lemma 14 in \cite{russo16}, with this strategy enough exploration is guaranteed for the posterior policy, i.e., the one stored by the decision-maker, to concentrate. Consequently, by Lemma~\ref{lemma:rate}, there exists a finite $j_0$ s.t. $\forall j > j_0$, $R_{\pi_j} < H(A^*) + \epsilon$, in the limit $N \rightarrow \infty$. This means that, for the first $j_0$ rounds, both the target and the approximating policies would play sub-optimal arms with non-zero probabilities. Then, $\forall j > j_0$, it is possible to play the exact \gls{ts} policy, leading to the optimal decisions for all future steps, and hence, to a sub-linear regret. The above procedure holds $\forall \epsilon >0$, and so $\forall R > H(A^*)$.
\end{proof}

As we can see, the strategy that achieves sub-linear regret consists in using \gls{ts} at the decision-maker, which updates the posteriors according to the Bayes rule, and from which a target policy $\pi^*$ is computed. Then, the decision-maker computes an approximate policy $Q_j(a|s)$ as indicated in the proof, and it samples the arms according to $Q_j(a|s)$. We further notice that the limit $H(A^*)$ serves as a lower bound for practical schemes, as to achieve it we need $N \rightarrow \infty$.

We observe that Theorem 2 in \cite{ayfer2020} states that, if the \gls{ts} strategy achieves sub-linear regret, then the policy converges to the deterministic policy selecting with probability one the optimal action. However, the converse is not always true in general, i.e., there could exist policies that play sub-optimal arms infinitely many times as $j \rightarrow \infty$, and still achieve sub-linear regret. The point is that such policies must play the optimal arms for most of the rounds, and could pull sub-optimal arms for a sub-linear amount of rounds. However, to play optimally in one round, the decision-maker needs $ R \geq R_{\pi*} = H(A^*)$. We are now ready to prove Lemma~\ref{lemma:min_rate}.

\begin{lemma}[]
	\label{lemma:min_rate_ts}
If $R < H(A^*)$, it is not possible to achieve sub-linear Bayesian system regret.
\end{lemma}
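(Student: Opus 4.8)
The plan is to prove the contrapositive by a single-letter converse argument; in fact the bound will hold for every $N$, not only in the limit $N\to\infty$. Suppose, for contradiction, that some scheme $\{f_j^{(N)},g_j^{(N)}\}_{j\ge1}$ achieves sub-linear Bayesian system regret, i.e. $\text{BR}(J,\{f_j^{(N)},g_j^{(N)}\})/J\to0$ as in Eq.~(\ref{eq:achievable_rate}), while using a rate $R<H(A^*)$; set $\epsilon:=H(A^*)-R>0$. For each round $j$ introduce the effective per-agent decoding law, averaged over the $N$ agents (and over the past history and the message), $\bar Q_j(a\mid s):=\frac1N\sum_{n=1}^N \Pr[\hat a_{j,n}=a\mid s_{j,n}=s]$, where $\hat a_{j,n}=g_{j,n}(m_j)$. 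Two facts about $\bar Q_j$ drive the proof. First, since the round regret in Eq.~(\ref{eq:bayesian_system_regret}) is linear in the arm distribution, $\frac1N(\text{round-}j\text{ regret})=\mathbb{E}_{P_S}\!\big[\sum_a\bar Q_j(a\mid S)\,\Delta(S,a)\big]$, where $\Delta(s,a):=\mu(s,a^*(s))-\mu(s,a)\ge0$. Second, because $\hat a_j^N$ is a deterministic function of the $\log_2 B=NR$-bit message $m_j$, the data-processing inequality (conditioned on $H(j-1)$, which is independent of the fresh i.i.d. contexts $s_j^N$) gives $I(s_j^N;\hat a_j^N\mid H(j-1))\le NR$; single-letterizing via $H(s_j^N\mid H(j-1))=\sum_n H(s_{j,n})$, the chain rule $H(s_j^N\mid\hat a_j^N)\le\sum_n H(s_{j,n}\mid\hat a_{j,n})$, and convexity of mutual information in the channel for the fixed source $P_S$, one obtains $R\ge\frac1N\sum_n I(s_{j,n};\hat a_{j,n})\ge I_{\bar Q_j}(S;A)$. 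This is the converse counterpart of the rate-distortion characterization of Sec.~\ref{subsec:rate_distortion} applied at zero distortion.

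Next I turn the regret hypothesis into closeness of $\bar Q_j$ to the optimal policy $\pi^*$. Let $\Delta_{\min}:=\min_s\min_{a:\,\Delta(s,a)>0}\Delta(s,a)>0$ be the smallest positive sub-optimality gap. Since $\frac1J\sum_{j\le J}(\text{round-}j\text{ regret})/N\to0$, for every $\delta>0$ there is a round $j$ with $\mathbb{E}_{P_S}[\sum_a\bar Q_j(a\mid S)\Delta(S,a)]\le\delta$, hence $\mathbb{E}_{P_S}\big[\bar Q_j(\{a:\Delta(S,a)>0\}\mid S)\big]\le\delta/\Delta_{\min}$. Taking, as in the genericity assumption implicit in the model, the optimal arm to be unique in each context so that $\pi^*(\cdot\mid s)$ is the point mass at $a^*(s)$, this reads $\|\bar Q_{j,SA}-\pi^*_{SA}\|_1=\mathbb{E}_{P_S}\big[\|\bar Q_j(\cdot\mid S)-\pi^*(\cdot\mid S)\|_1\big]\le2\delta/\Delta_{\min}$. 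Mutual information on the finite alphabet $\mathcal S\times\mathcal A$ is continuous, hence uniformly continuous, in the joint law, and $\pi^*$ is deterministic so $I_{\pi^*}(S;A)=H_{\pi^*}(A)=H(A^*)$; therefore $I_{\bar Q_j}(S;A)\ge H(A^*)-\xi(\delta)$ with $\xi(\delta)\to0$ as $\delta\to0$. (This plays the role, for a general coding scheme, of Lemma~\ref{lemma:rate}, which only treats the \gls{ts} posterior.)

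Combining the two parts, every such round yields $R\ge I_{\bar Q_j}(S;A)\ge H(A^*)-\xi(\delta)$. Since $\delta>0$ was arbitrary, letting $\delta\downarrow0$ gives $R\ge H(A^*)$, contradicting $R<H(A^*)$. Hence no scheme operating below $H(A^*)$ can make $\lim_{J\to\infty}\text{BR}(J,\{f_j^{(N)},g_j^{(N)}\})/J=0$, which is exactly the claim.

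\textbf{Main obstacle.} The delicate step is the single-letter converse $R\ge I_{\bar Q_j}(S;A)$: the decoded arms $\hat a_{j,1},\dots,\hat a_{j,N}$ are correlated (all produced from one $NR$-bit message), so the block bound $I(s_j^N;\hat a_j^N\mid H(j-1))\le NR$ cannot be split per agent directly; it is the subadditivity $H(s_j^N\mid\hat a_j^N)\le\sum_n H(s_{j,n}\mid\hat a_{j,n})$ combined with the i.i.d.\ source that restores the per-letter bound, after which convexity of $I(P_S,\cdot)$ in the channel aggregates the agents into $\bar Q_j$. A secondary point is ties in $\argmax_a\mu(s,a)$: if the optimal arm is not unique, $H(A^*)$ must be read as the minimum of $H_{\pi^*}(A)$ over consistent deterministic selectors $\pi^*$, and the argument goes through with that reading, since tied-optimal arms contribute no regret. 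Everything else---linearity of the regret in the arm distribution and continuity of mutual information---is routine.
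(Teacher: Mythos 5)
Your proof is correct in substance but follows a genuinely different, and in fact more self-contained, route than the paper's. The paper's converse is a two-line appeal to the single-letter rate--distortion characterization of Eq.~(\ref{eq:rate_dist}) (valid as $N\to\infty$): if $R<R_{\pi^*}=H(A^*)$, the conveyed policy must sit at positive $L_1$ distortion $D>0$ from $\pi^*$, hence a constant fraction of suboptimal pulls per round and linear regret. You instead prove the converse from first principles for \emph{any} encoder/decoder pair and \emph{any finite} $N$: the chain $NR\ge I(s_j^N;\hat a_j^N\mid H(j-1))\ge\sum_n I(s_{j,n};\hat a_{j,n})\ge N\,I_{\bar Q_j}(S;A)$ (data processing, subadditivity of conditional entropy for the i.i.d.\ source, convexity of $I(P_S,\cdot)$ in the channel) is sound, and combining it with the linearity of the per-round regret in $\bar Q_j$ and continuity of mutual information closes the argument. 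What your version buys is a rigorous finite-$N$ statement that does not lean on the asymptotic achievability result of \cite{CommDistribution}, and that covers arbitrary coding schemes rather than ``transmit some policy $Q$'' schemes; what the paper's version buys is brevity and consistency with the rate--distortion framing used throughout Sec.~\ref{subsec:rate_distortion}. One caveat you should make explicit: the Bayesian system regret averages over the problem instance as well, so $\Delta_{\min}$ and $H(A^*)$ are instance-dependent; your step from ``expected round regret $\le\delta$'' to ``$\bar Q_j$ is $O(\delta/\Delta_{\min})$-close to $\pi^*$ in $L_1$'' should be run per instance (or after conditioning on instances with gap bounded away from zero) before averaging. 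This looseness is shared by the paper's own proof, so it is not a defect relative to the source, but it is the one place where your otherwise tighter argument still inherits an implicit assumption.
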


\begin{proof}
As explained above, to play optimally in one round, the decision-maker needs $R \geq R_{\pi^*} = H(A^*)$, in the limit $N \rightarrow \infty$. If $R < H(A^*)$, as a consequence of Eq.~(\ref{eq:rate_dist}), the policy $Q(a|s)$ conveyed to the controller has non-zero distortion $d(Q_{SA}, \pi^*_{SA}) = D > 0$. If we take the $\text{L}_1$ norm as distortion measure, $Q(a|s)$ would sample a sub-optimal arm with constant probability of at least $D$ in every round $j = 1, \dots, J$. Consequently, sub-linear regret cannot be achieved.
\end{proof}

\subsection{Policy Compression}
\label{proof:policy_compression_derivation}

To compute the optimal policy that solves Eq.~(\ref{eq:rate_dist}) with a specific distortion function, we applied the well known Blahut-Arimoto iterative algorithm~\cite{cover:IT} that, given the considered distortion functions, is guaranteed to converge to the solution~\cite{blahu_convergence}.

We rewrite the optimization objective of Eq.~(\ref{eq:rate_dist}) as a double minimization problem (Sec. 10.8, \cite{cover:IT})
\begin{equation}
\label{eq:double_min}
R(D) = \min_{\tilde{Q}(a)} \min_{Q_{A|S} : d(Q_{SA}, P_{SA}) \leq D} \sum_{s,a} P_S(s) Q(a|s) \log_2 \frac{Q(a|s)}{\tilde{Q}(a)}.
\end{equation}
Following (Lemma 10.8.1, \cite{cover:IT}), the marginal $\tilde{Q}(y) = \sum_x P(x) Q(y|x)$ has the property 
\begin{equation}
\label{eq:outer_min}
\tilde{Q}(y) = \arg\min_{Q(y)} D_{KL} (P(x)Q(y|x) || P(x)Q(y)),
\end{equation}
that is, it minimizes the KL-divergence between the joint and the product $P(x)Q(y)$. This means that $\tilde{Q}(a)$ obtained by solving Eq.~(\ref{eq:double_min}) is indeed the marginal over the arms induced by $Q(a|s)$. Exploiting this formulation, it is possible to apply the iterative Blahut-Arimoto algorithm to solve the problem and find the solution \cite{cover:IT}. The process is initialized by setting a random $\tilde{Q}_0(a)$, which is used as a fixed point to compute
\begin{equation}
\label{eq:inner_min}
Q_1^*(a|s)= \text{argmin}_{Q_{A|S} : d(Q_{SA}, P_{SA}) \leq D} \sum_{s} P(s) \sum_a Q(a|s) \log_2 \frac{Q(a|s)}{\tilde{Q}_0(a)}.
\end{equation}
From $Q_1^*(a|s)$, we compute the optimal $\tilde{Q}_1(a)$ by solving Eq. (\ref{eq:outer_min}), which is simply the marginal $\tilde{Q}_1(a) = \sum_s P(s) Q_1^*(a|s)$. The process is iterated until convergence. We now solve the inner minimization problem, i.e., Eq.~(\ref{eq:inner_min}) with fixed $\tilde{Q}(a)$ and distortion $\E_{P_S} \left[ D_{\alpha}(Q, \pi) \right]$, for $\alpha \rightarrow 1$, and $\alpha \rightarrow 0$.

\subsubsection{Reverse KL Divergence ($\alpha \rightarrow 0$)}
\label{subsub:reverse_kl_proof}

To solve this problem, we solve the related Lagrangian

\begin{align*}
\mathcal{L}(Q(a|s), \lambda, \mu) = & \sum_s P(s) \sum_a Q(a|s) \log \frac{Q(a|s)}{\tilde{Q}_a} + \lambda \left( \sum_s P(s) \sum_a Q(a|s) \log \frac{Q(a|s)}{\pi(a|s)} - D \right) +  \\
& + \mu \left( \sum_s P(s) \sum_a Q(a|s) - 1\right)
\end{align*}
where the Lagrangian multiplier $\lambda$ has to be optimized to meet the constraints on the divergence, whereas $\mu$ ensures that the solution is a probability distribution, i.e., the elements sum to one. The positivity constraints on the terms are already satisfied by the fact that the solution has an exponential shape. We first take the derivative of the Lagrangian w.r.t. to the terms $Q(a|s)$ and set it to zero
\begin{align*}
\frac{\partial \mathcal{L}(Q(a|s), \lambda, \mu)}{\partial Q(a|s)} & = P(s) \log \frac{Q(a|s)}{\tilde{Q}(a)} + P(s) - \sum_{s'} P(s') Q(a|s') \frac{P(s)}{\tilde{Q}(a)} +\\
& + \lambda P(s) \left( \log \frac{Q(a|s)}{\pi(a|s)} + 1 \right) + \mu P(s) = 0
\end{align*}
finding 
\begin{align*}
\log \frac{Q(a|s)^{1 + \lambda}}{\tilde{Q}(a) \pi(a|s)^{\lambda}} & = - (\lambda + \mu) \\
Q(a|s)^{1+\lambda} & = e^{-(\mu +\lambda)} \tilde{Q}(a) \pi(a|s)^{\lambda} \\
Q(a|s) & = e^{\frac{-(\mu +\lambda)}{1+\lambda}} \tilde{Q}(a)^{\frac{1}{1+\lambda}} \pi(a|s)^{\frac{\lambda}{1+\lambda}}.
\end{align*}
We now define $\gamma := \frac{1}{1 + \lambda}$, $\gamma \in [0, 1]$, and obtain the distribution
\begin{align}
Q_{\gamma}(a|s) = \frac{\tilde{Q}(a)^{\gamma} \pi(a|s)^{1-\gamma}}{\sum_{a' \in \mathcal{A}} \tilde{Q}(a')^{\gamma} \pi(a'|s)^{1-\gamma}}, \quad \forall s \in \mathcal{S},  a \in \mathcal{A} .
\end{align}
By the convexity of KL-Divergence and its triangular inequality, we know the solution lies on the boundary of the constraints, i.e., when $\E_{P_S} \left[ D_{KL}(Q_{\gamma} || \pi) \right] = \delta$. 

\subsubsection{Forward KL Divergence ( $\alpha \rightarrow 1$)}
\label{subsub:kl_proof}

The derivative of the Lagrangian is (here the normalization factor is added in the end)

\begin{align*}
\frac{\partial \mathcal{L}(Q(a|s), \lambda)}{\partial Q(a|s)} & = P(s) \log \frac{Q(a|s)}{\tilde{Q}(a)} - \lambda P(s)  \frac{\pi(a|s)}{Q(a|s)}
\end{align*}
and setting it to zero leads to 
\begin{align*}
\frac{\partial \mathcal{L}\left(Q(a|s), \lambda\right)}{\partial Q(a|s)} & =0 \\
\log \frac{Q(a|s)}{\tilde{Q}(a)} - \lambda  \frac{\pi(a|s)}{Q(a|s)} &= 0 \\
\log \frac{\tilde{Q}(a)}{Q(a|s)} + \lambda  \frac{\pi(a|s)}{Q(a|s)} &= 0. \\
\end{align*}
We now we define $x := \frac{1}{Q(a|s)}, ~ \alpha := \lambda \pi(a|s), ~ \beta := 1$, and $\gamma := \log \tilde{Q}(a)$, obtaining
\begin{align*}
\alpha x + \beta \log x + \gamma &= 0 \\
\alpha x + \log \alpha x + \gamma - \log \alpha &= 0 \\
e^{\alpha x} \alpha x = \alpha e^{-\gamma} \\
x = \frac{1}{\alpha} \mathbf{W}_0 \left( \alpha e^{-\gamma}\right)
\end{align*}
where $\mathbf{W}_0(\cdot)$ is the Lambert function \cite{lambert}. We can now replace the introduced variables with the original terms and normalize, obtaining
\begin{align*}
	Q_{\lambda}(a|s) = \frac{\lambda \pi(a|s) \mathbf{W}_0 \left( \frac{\lambda \pi(a|s)}{\tilde{Q}(a)}\right)}{\sum_{a' \in \mathcal{A}}\lambda \pi(a'|s) \mathbf{W}_0 \left( \frac{\lambda \pi(a'|s)}{\tilde{Q}(a')}\right)},
\end{align*}
with $\lambda$ such that $\E_{P_S} \left[ D_{KL}(\pi || Q_{\lambda} ) \right] = \delta$.

\subsection{Clustering Compression Schemes}
\label{proof:cluster_compression}

\subsubsection{Reverse KL Divergence}
\label{subsub:rev_cluster_centroids}

Again, we compute the optimal centroids by solving the Lagrangian
\begin{align*}
\mathcal{L}(\mu^c_a, \lambda) = \sum_{s \in \mathcal{S}_c} P(s) \sum_{a \in \mathcal{A}} \mu_a^c \log \frac{\mu_a^c}{\pi(a|s)} + \lambda \left( \sum_{a \in \mathcal{A}} \mu_a^c -1 \right) 
\end{align*}
taking its derivative and solving the equality
\begin{align*}
\frac{\partial \mathcal{L}(\mu^c_a, \lambda)}{\partial \mu^c_a } &= \sum_{s \in \mathcal{S}_c} P(s) \left( \log \frac{\mu_a^c}{\pi(a|s)} + 1 \right) + \lambda = 0
\end{align*}
finding 
\begin{align*}
\log \mu_a^c A\left(\mathcal{S}_c\right) & = \sum_{s \in \mathcal{S}_c} P(s) \log \pi(a|s) + A\left(\mathcal{S}_c\right) + \lambda \\
\log \mu_a^c & = \sum_{s \in \mathcal{S}_c} \frac{P(s)}{A\left(\mathcal{S}_c\right)} \log \pi(a|s) + 1 + \frac{\lambda}{A\left(\mathcal{S}_c\right)} \\
\mu^c &= \frac{\prod_{s \in \mathcal{S}_c} \pi(a | s)^{\frac{P(s)}{A\left(\mathcal{S}_c\right)}}}{Z},
\end{align*}
where $Z$ is the normalizing factor, obtaining the shape expressed in Eq.~(\ref{eq:centroids_kl}).

\subsubsection{Forward KL Divergence}
\label{subsub:kl_cluster_centroids}

In this case, the Lagrangian is
\begin{align*}
\mathcal{L}(\mu^c_a, \lambda) = \sum_{s \in \mathcal{S}_c} P(s) \sum_{a \in \mathcal{A}} \pi(a|s) \log \frac{\pi(a|s)}{\mu_a^c} + \lambda \left( \sum_{a \in \mathcal{A}} \mu_a^c -1 \right).
\end{align*}
We take the derivative, and set it equal to zero
\begin{align*}
\frac{\partial \mathcal{L}(\mu^c_a, \lambda)}{\partial \mu^c_a } &= \sum_{s \in \mathcal{S}_c} P(s) \left(- \frac{\pi(a|s)}{\mu^c_a}\right) + \lambda = 0
\end{align*}
finding 
\begin{align*}
\mu^c_a = \frac{\sum_{s \in \mathcal{S}_c} P_S(s) \pi(a|s)}{Z}
\end{align*}
where $Z$ is the normalizing factor.

\end{document}